\def\eqref#1{equation~\ref{#1}}
\def\1{\bm{1}}
\def\eps{{\epsilon}}
\def\va{{\bm{a}}}
\def\vb{{\bm{b}}}
\def\ve{{\bm{e}}}
\def\vh{{\bm{h}}}
\def\vk{{\bm{k}}}
\def\vm{{\bm{m}}}
\def\vp{{\bm{p}}}
\def\vq{{\bm{q}}}
\def\vs{{\bm{s}}}
\def\vu{{\bm{u}}}
\def\vv{{\bm{v}}}
\def\vw{{\bm{w}}}
\def\vx{{\bm{x}}}
\def\vy{{\bm{y}}}
\DeclareMathAlphabet{\mathsfit}{\encodingdefault}{\sfdefault}{m}{sl}
\SetMathAlphabet{\mathsfit}{bold}{\encodingdefault}{\sfdefault}{bx}{n}
\newcommand{\E}{\mathbb{E}}
\DeclareMathOperator*{\argmin}{arg\,min}
\newcommand{\attention}{\mathrm{Attention}}
\newcommand{\feedforward}{\mathrm{FF}}
\newcommand{\expect}{\mathbb{E}}
\newcommand{\var}{\mathrm{Var}}
\newtheorem{lem}{Lemma}
\newtheorem{thm}{Theorem}
\newcommand{\tdot}{\texttt{dot}\xspace}
\newcommand{\tadd}{\texttt{aff}\xspace}
\newcommand{\tmov}{\texttt{mov}\xspace}
\newcommand{\tdiv}{\texttt{div}\xspace}
\newcommand{\tmul}{\texttt{mul}\xspace}
\newcommand{\tr}{\texttt{r}\xspace}
\newcommand{\ts}{\texttt{s}\xspace}
\newcommand{\tw}{\texttt{w}\xspace}
\newcommand{\ttt}{\texttt{t}\xspace}
\let\@fnsymbol\@alph
\crefname{lem}{Lemma}{Lemmas}
\Crefname{lem}{Lemma}{Lemmas}
\Crefname{thm}{Theorem}{Theorems}
\definecolor{ekinblue}{HTML}{4169e1}
\definecolor{citecolor}{HTML}{626262}
\newcommand{\jda}[1]{}
\newcommand{\tnote}[1]{}
\newcommand{\ekin}[1]{}
\title{What learning algorithm is in-context learning? Investigations with linear models}
\author{Ekin Akyürek$^{1,2,}\thanks{Correspondence to \href{mailto:akyurek@mit.edu}{akyurek@mit.edu}. Ekin is a student at MIT, and began this work while he was intern at
Google Research. Code and reference implementations are released at this \href{https://github.com/ekinakyurek/google-research/blob/master/incontext}{web page}}.$ ~~
Dale Schuurmans$^{1}$ ~~ Jacob Andreas$^{\textcolor{red}{*}2}$~~
Tengyu Ma$^{\textcolor{red}{*}1,3,}$\thanks{The work is done when Tengyu Ma works as a visiting researcher at Google Research.} ~~
Denny Zhou$^{\textcolor{red}{*}1}$ \AND
\normalfont{$^1$Google Research} \quad \normalfont{$^2$MIT CSAIL} \quad \normalfont{$^3$ Stanford University} \quad \normalfont{$^{\textcolor{red}{*}}$collaborative advising}
}
\theoremstyle{definition}
\newtheorem{definition}{Definition}[section]
\newcommand{\ostar}{\mathbin{\mathpalette\make@circled\star}}
\newcommand{\make@circled}[2]{%
  \ooalign{$\m@th#1\smallbigcirc{#1}$\cr\hidewidth$\m@th#1#2$\hidewidth\cr}%
}
\newcommand{\smallbigcirc}[1]{%
  \vcenter{\hbox{\scalebox{0.77778}{$\m@th#1\bigcirc$}}}%
}
\begin{document}

\maketitle

\begin{abstract}
Neural sequence models, especially transformers, exhibit a remarkable capacity for \emph{in-context learning}. They can construct new predictors from sequences of labeled examples $(x, f(x))$ presented in the input without further parameter updates. We investigate the hypothesis that transformer-based in-context learners implement standard learning algorithms \emph{implicitly}, by encoding smaller models in their activations, and updating these implicit models as new examples appear in the context. Using linear regression as a prototypical problem, we offer three sources of evidence for this hypothesis. First, we prove by construction that transformers can implement learning algorithms for linear models based on gradient descent and closed-form ridge regression. Second, we show that trained in-context learners closely match the predictors computed by gradient descent, ridge regression, and exact least-squares regression, transitioning between different predictors as transformer depth and dataset noise vary, and converging to Bayesian estimators for large widths and depths. Third, we present preliminary evidence that in-context learners share algorithmic features with these predictors: learners' late layers non-linearly encode weight vectors and moment matrices.  These results suggest that in-context learning is understandable in algorithmic terms, and that (at least in the linear case) learners may rediscover standard estimation algorithms.
\end{abstract}

\section{Introduction}
\tnote{the color for the references seem to be too light. Use blue or green?}\ekin{fixed.}
\tnote{please use some tools to clean up all the comments in latex source code before arxiving beceause the source code will be submitted.}\ekin{yes, I use a cleaner script that can be embedded to overleaf. very convenient}

One of the most surprising behaviors observed in large neural sequence models 
is \textbf{in-context learning} \citep[ICL;][]{Brown2020LanguageMA}. When trained appropriately, models can map from sequences of $(x, f(x))$ pairs to accurate predictions $f(x')$ on novel inputs $x'$. This behavior occurs both in models trained on collections of few-shot learning problems \citep{chen2021meta, min2021metaicl} and surprisingly in large language models trained on open-domain text \citep{Brown2020LanguageMA,zhang2022opt, chowdhery2022palm}.
\tnote{edited, unclear what a learner is. Learner in many scenarios means the learning algorithms, e.g., the algorithm to train the transformer. If the learner refers to the in-context learner here, then the use of the word learner already indicates it constructs a map from in-context examples to a predictor, which seems to introduce a tautology instead of a contrast}\ekin{thanks}
ICL requires a model to implicitly construct a map from in-context examples to a predictor 
without any updates to the model's parameters themselves.
How can a neural network with fixed parameters to learn a new function from a new dataset on the fly?

This paper investigates the hypothesis that some instances of ICL can be understood as \emph{implicit} implementation of known learning algorithms: in-context learners encode an implicit, context-dependent 
model in their hidden activations, and train this model on in-context examples in the course of computing these internal activations.
As in recent investigations of empirical properties of ICL \citep{Garg2022WhatCT,Xie2022AnEO}, we study the behavior of transformer-based predictors \citep{Vaswani2017AttentionIA} on a restricted class of learning problems, here linear regression. Unlike in past work, our goal is not to understand \emph{what} functions
ICL can learn, but \emph{how} it learns these functions: the specific inductive biases and algorithmic properties of transformer-based ICL.

In \cref{sec:theory}, we investigate theoretically what learning algorithms transformer decoders can implement. We prove by construction that they require only a modest number of layers and hidden units to train linear models:
for $d$-dimensional regression problems\tnote{added to define $d$}\ekin{thanks}, 
with $\mathcal{O}(d)$ hidden size and constant depth, 
a transformer can implement a single step of gradient descent; and with $\mathcal{O}(d^2)$ hidden size and constant depth, a transformer can update a ridge regression solution to include a single new observation.
Intuitively, $n$ steps of these algorithms can be implemented with $n$ times more layers. 

In \cref{sec:whatlearning}, we investigate empirical properties of trained in-context learners. We begin by constructing linear regression problems in which learner behavior is under-determined by training data (so different valid learning rules will give different predictions on held-out data).
We show that model predictions are closely matched by existing predictors (including those studied in \cref{sec:theory}), and that they \emph{transition} between different predictors as model depth and training set noise vary, behaving like Bayesian predictors at large hidden sizes and depths.
Finally, in \cref{sec:probe}, we present preliminary experiments showing how model predictions are computed algorithmically. We show that important
intermediate quantities computed by learning algorithms for linear models, including parameter vectors and moment matrices, can be  decoded from in-context learners' hidden activations.

A complete characterization of which learning algorithms are (or could be) implemented by deep networks has the potential to improve both our theoretical understanding of their capabilities and limitations, and our empirical understanding of how best to train them.
This paper offers first steps toward such a characterization:
some in-context learning appears to involve familiar algorithms, discovered and implemented by transformers from sequence modeling tasks alone.

\section{Preliminaries}

Training a machine learning model involves many decisions, including the choice of model architecture, loss function and learning rule.
Since the earliest days of the field, research has sought to understand whether these modeling decisions can be automated using the tools of machine learning itself. Such ``meta-learning'' approaches typically treat learning as a \textbf{bi-level optimization} problem \citep{Schmidhuber1996SimplePO,Andrychowicz2016LearningTL,Finn2017ModelAgnosticMF}: they define ``inner'' and ``outer'' models and learning procedures, then train an outer model to set parameters for an inner procedure (e.g.\ initializer or step size) to maximize inner model performance across tasks.

Recently, a more flexible family of approaches has gained popularity.
In \textbf{in-context learning} (ICL), meta-learning is reduced to ordinary
supervised learning: a large sequence model (typically implemented as a transformer network) is trained to map from sequences $[x_1, f(x_1), x_2, f(x_2), ..., x_n]$ to predictions $f(x_n)$ \citep{Brown2020LanguageMA,Olsson2022IncontextLA, laskin2022context, kirsch2021meta}. ICL does not specify an explicit inner learning procedure; instead, this procedure exists only implicitly through the parameters of the sequence model. ICL has shown impressive results on synthetic tasks and naturalistic language and vision problems  \citep{Garg2022WhatCT,min2021metaicl,zhou2022learning, tabpfn}.

Past work has characterized \emph{what} kinds of functions ICL can learn \citep{Garg2022WhatCT, laskin2022context, muller2021transformers} and the distributional properties of pretraining that can elicit in-context learning \citep{xie2021explanation, chan2022data}. But \emph{how} ICL learns these functions has remained unclear.
What learning algorithms (if any) are implementable by deep network models? Which algorithms are actually discovered in the course of training? 
This paper takes first steps toward answering these questions, focusing on a widely used model architecture (the transformer) and an extremely well-understood class of learning problems (linear regression).

\subsection{The Transformer Architecture}
\textbf{Transformers} \citep{Vaswani2017AttentionIA} are neural network models that map a sequence of input vectors $\vx = [x_1, \ldots, x_n]$ to a sequence of output vectors $\vy = [y_1, \ldots, y_n]$. Each \textbf{layer} in a transformer maps a matrix  $H^{(l)}$ (interpreted as a sequence of vectors) to a sequence $H^{(l+1)}$. To do so, a transformer layer processes each column $\vh_i^{(l)}$ of $H^{(l)}$ in parallel. Here, we are interested in \emph{autoregressive} (or ``decoder-only'') transformer models in which each layer first computes a \textbf{self-attention}:
\begin{align}\label{eq:tf-att}
    \va_i &= \attention(\vh^{(l)}_i; W^F, W^Q, W^K, W^V) \\
    &= W^F [\vb_1, \ldots, \vb_m]
    \intertext{where each $\vb$ is the response of an %
    ``attention head'' defined by:}
    \vb_j &= \mathrm{softmax}\Big( (W_j^Q \vh_i)^\top (W_j^K H_{:i}) \Big) (W_j^V H_{:i}) ~ .
\end{align}
then applies a \textbf{feed-forward transformation:}
\begin{align}\label{eq:tf-mlp}
    \vh_i^{(l+1)} &= \feedforward(\va_i; W_1, W_2) \\
    &= W_1 \sigma ( W_2 \lambda (\va_i + \vh_i^{(l)} )) + \va_i + \vh_i^{(l)} ~ .
\end{align}
Here $\sigma$ denotes a nonlinearity, e.g.\ a Gaussian error linear unit \citep[GeLU;][]{hendrycks2016gaussian}:
\begin{align}
    \sigma(x) &= \frac{x}{2} \Big(1 + \mathrm{erf}\Big(\frac{x}{\sqrt{2}}\Big)\Big) ~ ,
\end{align}
and $\lambda$ denotes layer normalization \citep{ba2016layer}:
\begin{align}
    \lambda(\vx) &= \frac{\vx - \expect[\vx]}{\sqrt{\var[\vx]}} ~ ,
\end{align}
where the expectation and variance are computed across the entries of $\vx$.
To map from $\vx$ to $\vy$, a transformer applies a sequence of such layers, each with its own parameters. 
We use $\theta$ to denote a model's full set of parameters (the complete collection of $W$ matrices across layers).
The three main factors governing the computational capacity of a transformer are its \textbf{depth} (the number of layers), its \textbf{hidden size} (the dimension of the vectors $\vh$), and the number of \textbf{heads} (denoted $m$ above).

\subsection{Training for In-Context Learning}

We study transformer models directly trained on an ICL objective. (Some past work has found that ICL also ``emerges'' in models trained on general text datasets; \citeauthor{Brown2020LanguageMA}, \citeyear{Brown2020LanguageMA}.) To train a transformer $T$ with parameters $\theta$ to perform ICL, we first define a class of functions $\mathcal{F}$, a distribution $p(f)$ supported on $\mathcal{F}$, a distribution $p(x)$ over the domain of functions in $\mathcal{F}$, and a loss function $\mathcal{L}$. We then choose $\theta$ to optimize the auto-regressive objective, where the resulting $T_\theta$ is an \textbf{in-context learner}:
\begin{equation}\label{eq:objective}
\argmin_\theta ~
\mathop{\E}_{\substack{\vx_1, \ldots, \vx_n \sim p(x)\\f \sim p(f)}}\left[\mathlarger{\mathlarger{\sum}}_{i=1}^{n} \mathcal{L}\left(f(\vx_i), T_{\theta}\left([\vx_1, f(\vx_1) \dots, \vx_i]\right)\right)\right]
\end{equation}

\subsection{Linear Regression}\label{sec:regression}
Our experiments focus on \textbf{linear regression} problems. In these problems, $\mathcal{F}$ is the space of linear functions $f(\vx) = \vw^\top \vx$ where $\vw, \vx \in \mathbb{R}^d$, and the loss function is the squared error $\mathcal{L}(y, y') = (y - y')^2$. 
Linear regression is a model problem in machine learning and statistical estimation, with diverse algorithmic solutions. %
It thus offers an ideal test-bed for understanding ICL.
Given a dataset with inputs $X = [\vx_1, \ldots, \vx_n]$ and $\vy = [y_1, \ldots, y_n]$, the (regularized) linear regression objective:
\begin{align} \label{eq:ols-obj}
 \sum_i \mathcal{L}(&\vw^\top \vx_i, y_i) + \lambda\| \vw \|_2^2 \\
    \text{is minimized by:}& \quad  \vw^* = (X^\top X + \lambda I)^{-1} X^\top y \label{eq:ols-soln}
\end{align}
\tnote{$\lambda/2$ in the regularized loss (equation 9) so that the miminizer is equation 10 --- please also see my later comments}\ekin{discussed in the email.}
With $\lambda = 0$, this objective is known as \textbf{ordinary least squares regression} (OLS); with $\lambda > 0$, it is known as \textbf{ridge regression} \citep{hoerl1970ridge}. (As discussed further in \cref{sec:whatlearning}, ridge regression can also be assigned a Bayesian interpretation.)
To present a linear regression problem to a transformer, we encode both $x$ and $f(x)$ as $d+1$-dimensional vectors:
$\tilde{x}_i = [0, x_i]$,
$\tilde{y}_i = [y_i, \mathbf{0}_{\scriptscriptstyle d}]
$,
where $\mathbf{0}_{\scriptscriptstyle d}$ denotes the $d$-dimensional zero vector.

\section{What learning algorithms can a transformer implement?}\label{sec:theory}

For a transformer-based model to solve \cref{eq:ols-obj} by implementing an explicit learning algorithm, that learning algorithm must be implementable via \cref{eq:tf-att} and \cref{eq:tf-mlp} with some fixed choice of transformer parameters $\theta$. In this section, 
we prove constructively that such parameterizations exist, giving concrete implementations of two standard learning algorithms.
These proofs yield  upper bounds on how many layers and hidden units suffice to implement (though not necessarily learn) each algorithm. Proofs are given in \cref{app:Thm1,,app:Thm2}.

\subsection{Preliminaries}

It will be useful to first establish a few computational primitives with simple transformer implementations.
Consider the following four functions from $\mathbb{R}^{H \times T} \to \mathbb{R}^{H \times T}$:

    $\textbf{\tmov}(H; s, t, i, j, i', j')$: selects the entries of the $s^{\textrm{th}}$ column of $H$ between rows $i$ and $j$, and copies them into the $t^{\textrm{th}}$ column ($t \geq s$) of $H$ between rows $i'$ and $j'$, yielding the matrix:
    \begin{equation*}
    \label{eq:mov}
        \left[ \begin{array}{ccc}
        | & H_{:i-1,t} & | \\
        H_{:, :t} & H_{i':j',s} & H_{:,t+1:} \\
        | & H_{j,t} & |
        \end{array} \right] ~ .
    \end{equation*}

    $\textbf{\tmul}(H; a, b, c,(i, j), (i', j'), (i'', j''))$: in \emph{each} column $\vh$ of $H$, interprets the entries between $i$ and $j$ as an $a \times b$ matrix $A_1$, and the entries between $i'$ and $j'$ as a $b \times c$ matrix $A_2$,
    multiplies these matrices together, \tnote{do you need to flatten the resulting matrix into $bc$ dimensional vector and then put in $h$?}\ekin{yes.}and stores the result between rows $i''$ and $j''$, yielding a matrix in which each column has the form $[\vh_{:i''-1}, A_1 A_2, \vh_{j'':}]^\top$.\tnote{If I understood correclty about the operation, I think we should mention an implicit requirement for the operator is that $j-i+1=ab$ and $j'-i'+1=bc$} \tnote{I saw the footnote but I think it's perhaps misplaced. Also the calculation was wrong if I am not mistaken. It seems that the indexing here are all inclusion (because otherwise $H_{:,:t-1}$ doesn't make sense. So you should have $j-i+1=ab$}\ekin{thanks for catching this. I didn't meant to include the last index to be compatible with Python notation. So start indices are included, end indices are not. I fixed the rest to compatible with this.}\ekin{I put the footnote to the end of the lemmma, but let me know if there is a better place. I didn't want to write for every operator separately for space.}

    $\textbf{\tdiv}(H; (i, j), i', (i'', j''))$: in each column $\vh$ of $H$, divides the entries between $i$ and $j$ by the absolute value of the entry at $i'$, and stores the result between rows $i''$ and $j''$, yielding a matrix in which every column has the form $[\vh_{:i''-1}, \vh_{i:j} / |\vh_{i'}|, \vh_{j'':}]^\top$.

    $\textbf{\tadd}(H; (i, j), (i', j'), (i'', j''), W_1, W_2, b)$: in each column $\vh$ of $H$, applies an affine transformation to the entries between $i$ and $j$ and $i'$ and $j'$, then stores the result between rows $i''$ and $j''$, yielding a matrix in which every column has the form $[\vh_{:i''-1}, W_1 \vh_{i:j} + W_2 \vh_{i':j'} + b, \vh_{j'':}]^\top$.

\begin{lem}\label{lem:lemma1}
Each of \tmov, \tmul, \tdiv and \tadd can be implemented by a single transformer decoder layer: in \cref{eq:tf-att} and \cref{eq:tf-mlp}, there exist matrices $W^Q$, $W^K$, $W^V$, $W^F$, $W_1$ and $W_2$ such that, given a matrix $H$ as input, the layer's output has the form of the corresponding function output above.
\footnote{We omit the trivial size preconditions, e.g. $\textbf{\tmul: } (i-j=a*b, i'-j'=b*c, i''-j''=c*d)$.
}
\end{lem}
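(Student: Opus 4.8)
The plan is to split the four primitives by the kind of information flow they require. The operations \tmul, \tdiv and \tadd act \emph{within} each column, so I would implement them with the feed-forward block of \cref{eq:tf-mlp} while making attention inert; conversely \tmov moves data \emph{across} columns, so I would implement it with the self-attention of \cref{eq:tf-att} while making the feed-forward block inert. Two mechanical devices make this clean. First, a mechanism is switched off by zeroing one weight: setting $W^F=0$ forces $\va_i=0$ so the layer reduces to $\vh_i^{(l+1)} = \vh_i^{(l)} + W_1\sigma(W_2\lambda(\vh_i^{(l)}))$, while setting $W_1=0$ reduces it to $\vh_i^{(l+1)} = \vh_i^{(l)} + \va_i$. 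Second, I would reserve a few coordinates of each $\vh_i$ as scratch space, in particular a coordinate pinned to the constant $1$ (to supply the bias $b$ and additive constants) and padding coordinates used to fix the empirical mean and variance seen by the layer normalization $\lambda$, so that $\lambda$ acts as a \emph{known} affine rescaling on the active coordinates that can be folded into $W_1,W_2$.

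For \tmov I would use a single head. Assuming the columns carry a positional signal (a standard positional encoding, or one written into scratch coordinates), I choose $W^Q,W^K$ so that the logit $(W^Q\vh_t)^\top(W^K\vh_s)$ dominates all others; because attention is causal and $t\ge s$, column $s$ lies in the visible window $H_{:t}$, and the $\softmax$ concentrates on column $s$. Then $W^V$ selects rows $i$ through $j$ of the attended column and $W^F$ routes them into rows $i'$ through $j'$ of $\va_t$; with the feed-forward block off these land in $\vh_t^{(l+1)}$ as specified, leaving the other columns untouched.

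For the per-column primitives I would use the feed-forward block with attention off. \tadd is affine, which is the native capacity of a linear map plus the reserved constant coordinate; the only subtlety is pushing the affine map through $\sigma$, which I handle by operating in the near-linear regime of $\sigma$ (small pre-activations, so $\sigma(x)\approx\sigma(0)+\sigma'(0)x$) and rescaling. For \tmul I would reduce the $a\times b$ by $b\times c$ product to its scalar entries $\sum_k (A_1)_{mk}(A_2)_{kn}$, i.e. to sums of pairwise products $uv$, and realize each product from the curvature of $\sigma$ via the polarization identity $uv=\tfrac14[(u+v)^2-(u-v)^2]$ together with a second-order expansion of $\sigma(\pm\epsilon z)$ that exposes the $z^2$ term ($\sigma''(0)\ne 0$ for the GeLU). \tdiv, dividing by $|\vh_{i'}|$, is the genuinely non-polynomial case: on the relevant compact domain bounded away from $\vh_{i'}=0$ I would invoke the universal-approximation capacity of the two-layer feed-forward map to approximate $(\vh_{i:j},\vh_{i'}) \mapsto \vh_{i:j}/|\vh_{i'}|$.

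The main obstacle is that $\softmax$ and $\sigma$ are smooth, so \tmov, \tmul and \tdiv are realized only in the sense of \emph{arbitrarily accurate} approximation rather than bit-exactly: sharpening the attention toward a hard column selection and shrinking the expansion scale $\epsilon$ both incur error that I would have to bound and propagate. Intertwined with this is the bookkeeping for layer normalization, which I expect to be the hardest technical point, since the scratch coordinates must keep $\lambda$'s mean-subtraction and variance-rescaling predictable precisely while the nonlinear products and the reciprocal are being formed, i.e. exactly when the statistics $\lambda$ depends on are being disturbed.
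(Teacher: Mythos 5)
Your proposal reaches the same goal by a different organization, and most of its ingredients have counterparts in the paper. The paper does not treat the four primitives separately: it first defines a single ``Read--Arithmetic--Write'' (\texttt{RAW}) master operator, shows that \tmov, \tadd and the dot-product case of \tmul are all instances of it, implements \texttt{RAW} once (using \emph{two} attention heads --- one reading with a pattern dictated by positional embeddings, one cancelling the residual stream --- plus the MLP for the arithmetic), and then parallelizes independent \texttt{RAW} instances within one layer to obtain full matrix products for \tmul. Your block-switching decomposition (attention-only for \tmov, MLP-only for the column-local operators, each mechanism silenced by zeroing $W^F$ or $W_1$) is more elementary and avoids the abstraction, at the cost of redoing the LayerNorm and residual bookkeeping separately for each primitive. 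Your multiplication trick is essentially the paper's: the identity $\sqrt{\pi/2}\,\bigl(\operatorname{GeLU}(x+y)-\operatorname{GeLU}(x)-\operatorname{GeLU}(y)\bigr) = xy + \mathcal{O}(x^3+y^3)$ is the same curvature/polarization idea you describe, and your small-signal linearization for \tadd is an acceptable substitute for the paper's large-offset bypass $\operatorname{GeLU}(N+x)-N\approx x$.

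Two concrete gaps remain. First, \tmov: with softmax attention, every column's attention weights sum to one, so once $W^V$ and $W^F$ are set to route rows $i\!:\!j$ of the attended column into rows $i'\!:\!j'$, every column $u\neq t$ also receives whatever \emph{it} attends to, violating the requirement that only column $t$ change. You need either a convention letting non-target columns attend to an ``imaginary'' zero timestep (the paper's device, prepending a zero to the attention vector) or a second head that cancels the unwanted contribution; your single-head sketch has neither. Second, \tdiv: your appeal to universal approximation is an existence argument whose MLP width must grow with the desired accuracy and with the bound on $1/|\vh_{i'}|$, whereas the paper obtains division constructively from LayerNorm itself: placing a dominant entry $Nc$ in a scratch coordinate makes the variance $\approx 2N^2c^2/L$, so the normalization step divides the payload coordinates by $|c|$ (which is also exactly why the primitive is defined with division by the \emph{absolute value}). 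That trick keeps \tdiv within the same large-$N$ approximation framework and modest width as the other primitives, and it resolves precisely the difficulty you flag as hardest --- keeping LayerNorm's statistics predictable while the nonlinear computation is underway --- by making LayerNorm the computation rather than an obstacle to it.
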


With these operations, we can implement building blocks of two important learning algorithms.

\subsection{Gradient descent}

Rather than directly solving linear regression problems by evaluating \cref{eq:ols-soln}, a standard approach to learning exploits a generic loss minimization framework, and optimizes the ridge-regression objective in \cref{eq:ols-obj} via gradient descent on parameters $\vw$. This involves repeatedly computing updates:
\begin{equation}
\label{eq:sgd}
    \vw' = \vw - \alpha \frac{\partial}{\partial \vw} \Big( \mathcal{L}(\vw^\top \vx_i, y_i) + \lambda \| \vw \|_2^2 \Big) = \vw - 2\alpha (\vx \vw^\top \vx - y \vx + \lambda \vw)
\end{equation}
\tnote{I think you should have $\lambda/2\cdot \|w\|_2^2$ and $\lambda w$ here. Otherwise many places in the paper will be off by a factor of 2 and I would wonder how the bayes estimator experiment can work out. If yes, could you change it?}\ekin{fixed.}

for different examples $(\vx_i, y_i)$, and finally predicting $\vw'^{\top} \vx_n$ on a new input $x_n$.
A step of this gradient descent procedure can be implemented by a transformer:
\begin{thm}
\label{thm:sgd}
A transformer can compute \cref{eq:sgd} (i.e.\ the prediction resulting from single step of gradient descent on an in-context example)
with constant number of layers and $O(d)$ hidden space, where $d$ is the problem dimension of the input $x$. Specifically, there exist transformer parameters $\theta$ such that,
given an input matrix of the form: \begin{equation}
\label{eq:thm-input}
    H^{(0)} = \left[ \cdots \begin{array}{ccc}
    0 & y_i & 0 \\
    \vx_i & 0 & \vx_n 
\end{array}\cdots \right] ~ ,
\end{equation}
the transformer's output matrix $H^{(L)}$ \tnote{It's unclear to me what it's referred to as a matrix if there is just a single entry}\ekin{I couldn't guess why you think H is single entry, can you open this?} 
contains an entry equal to $\vw'^\top \vx_n$ (\cref{eq:sgd}) at the column index where $x_n$ is input.
\end{thm}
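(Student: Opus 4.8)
The plan is to build the required parameters $\theta$ by composing the four primitive layers guaranteed by \cref{lem:lemma1}, exploiting the fact that the input contains no explicit weight vector, so the single gradient step is taken from $\vw_0 = \vzero$. In that case the regularization term and the $\vx\vw^\top\vx$ term in \cref{eq:sgd} vanish, and the update collapses to $\vw' = 2\alpha \sum_i y_i \vx_i$, so the quantity we must produce at the $\vx_n$ column is $\vw'^\top \vx_n = 2\alpha \sum_i y_i\,(\vx_i^\top \vx_n)$. The one structural observation driving the whole construction is that this target is \emph{bilinear} in the in-context data (a product $y_i \vx_i$ summed against the query), and the attention value $W^V \vh$ is only \emph{linear} in a column; hence the product $y_i \vx_i$ cannot be emitted by a single value map and must be formed explicitly inside each example column \emph{before} any aggregation across columns takes place.

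Concretely, I would chain the primitives as follows. First apply \tmov to copy $\vx_i$ forward from its column into the adjacent $y_i$ column; because $\vx_i$ sits at a smaller position index than $y_i$, this respects the $t \ge s$ (causal) direction of \tmov, and after it each label column carries both $y_i$ and $\vx_i$ in disjoint row blocks. Second, apply \tmul in every such column, reading $\vx_i$ as a $d\times 1$ matrix and $y_i$ as a $1\times 1$ matrix, to write $y_i\vx_i$ into a fresh $d$-row block. Third, use a single attention layer at the $\vx_n$ column to aggregate these blocks across the context into the vector $\vw'$, stored in the $\vx_n$ column (this is the step discussed below). Finally, apply \tmul once more, now inside the $\vx_n$ column, reading $\vw'^\top$ as $1\times d$ and $\vx_n$ as $d\times 1$, to compute the scalar $\vw'^\top \vx_n$ and place it in the prediction (label) slot of that column. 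Every intermediate quantity ($\vx_i$, $y_i$, $y_i\vx_i$, $\vw'$) occupies an $O(d)$-sized row block, and the whole pipeline uses a fixed, constant number of layers, giving the claimed $O(d)$ hidden size and constant depth.

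The main obstacle is the aggregation in the third step, because attention's softmax returns a \emph{convex combination} of values rather than a raw sum. My plan is to drive the softmax toward uniform weights over exactly the label-carrying columns: I would embed a selector coordinate (constant on label columns, zero elsewhere) into $W^K$ and $W^Q$ so that the pre-softmax logits are equal across the target columns and strongly negative on all others, making the layer compute $\mathrm{mean}_i\,(y_i\vx_i)$ at the $\vx_n$ position. The residual normalization by the example count is then either folded into the effective step size $\alpha$ (equivalently, defining the gradient step on the averaged loss), or recovered exactly by computing the count with an auxiliary uniform-attention head and dividing via the \tdiv primitive; a suitable scaling matrix in $W^V$/$W^F$ supplies the factor $2\alpha$. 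Verifying that the logits can be made exactly equal across targets—so that softmax yields a true uniform average despite the intervening layer normalization $\lambda$ and nonlinearity $\sigma$—is the one point requiring care, and I would lean on \cref{lem:lemma1} to discharge the layer-level details. The same scaffolding extends to an arbitrary initialization $\vw_0$ by carrying $\vw_0$ in $d$ extra rows of $H^{(0)}$ and inserting one \tadd layer to add the $-2\alpha\lambda\vw_0$ and $-2\alpha(\vx_i^\top\vw_0)\vx_i$ contributions before the dot product.
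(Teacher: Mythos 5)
Your core construction is sound and reaches the theorem's conclusion, but by a genuinely different route than the paper. The paper's proof never aggregates across examples with attention at all: it hardcodes an \emph{arbitrary} initialization $\vw$ into the transformer parameters (as the weight of one \tadd layer and the bias of another), then runs a purely sequential per-column pipeline
$\tmov \to \tadd\,(\vw^\top\vx) \to \tadd\,(\vw^\top\vx - y) \to \tmul\,(\vx(\vw^\top\vx-y)) \to \tadd \to \tadd\,(\vw') \to \tmov \to \tmul\,(\vw'^\top\vx_n)$,
which implements the full update of \cref{eq:sgd} --- including the curvature term $\vx\vw^\top\vx$ and the regularization term $\lambda\vw$ --- for a \emph{single} example, and handles more examples only by stacking $\mathcal{O}(n)$ further layers (sequential SGD). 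Your route instead observes that with $\vw_0=\vzero$ those terms vanish, forms $y_i\vx_i$ inside each label column (\tmov then \tmul, exactly as the paper's first steps), and then uses uniform softmax attention from the $\vx_n$ column to aggregate all examples \emph{in parallel}. This buys something the paper's construction does not have: one step of batch gradient descent over all $n$ in-context examples in constant depth, rather than depth growing with $n$. Your structural observation that $y_i\vx_i$ is bilinear and therefore must be materialized per-column before any value map can aggregate it is correct and is implicitly the same reason the paper's construction also multiplies before moving. The mean-versus-sum issue is handled acceptably by your "averaged loss" reading (for the theorem's single-example instantiation, $|K(i)|=1$ and the issue disappears entirely).

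Two points in your write-up are genuinely wrong as stated, though both are repairable within constant depth. First, your extension to arbitrary $\vw_0$ claims that "one \tadd layer" suffices to add the $-2\alpha(\vx_i^\top\vw_0)\vx_i$ contribution. It does not: even if $\vw_0$ is baked into parameters so that $\vx_i^\top\vw_0$ is affine in the hidden state (one \tadd), the product $(\vx_i^\top\vw_0)\,\vx_i$ is a product of two hidden-state quantities and requires a \tmul layer --- this is precisely why the paper's proof spends a \tmul on $\vx(\vw^\top\vx - y)$. Relatedly, $\vw_0$ must enter through $\theta$ (e.g.\ as a bias written into scratch rows, as in the paper's "write $\vw$" step), not by "carrying $\vw_0$ in extra rows of $H^{(0)}$", since \cref{eq:thm-input} fixes the input format. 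Second, your fallback fix for the softmax normalization is backwards: attention gives you the mean $\frac{1}{n-1}\sum_i y_i\vx_i$, so you need to \emph{multiply} by the count (or \tdiv by a stored copy of $1/(n-1)$, which you can obtain by attending uniformly over label columns with a value that is $1$ only at the first of them); dividing the mean by the count moves you further from the sum. Neither error breaks the approach --- with those repairs, your construction proves the theorem for $\vw_0 = \vzero$ in four layers and extends to general $\vw_0$ with a constant number of additional \tadd/\tmul layers, matching the paper's bounds.
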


\subsection{Closed-form regression}

Another way to solve the linear regression problem is to directly compute the closed-form solution \cref{eq:ols-soln}. This is somewhat challenging computationally, as it requires inverting the regularized covariance matrix $X^\top X + \lambda I$. However, one can exploit the Sherman--Morrison formula \citep{sherman1950adjustment} to reduce the inverse to a sequence of rank-one updates performed example-by-example. For any invertible square $A$,
\begin{equation}
    \left(A+\vu \vv^{\top}\right)^{-1}=A^{-1}-\frac{A^{-1} \vu \vv^{\top} A^{-1}}{1+\vv^{\top} A^{-1} \vu}.
\end{equation}
Because the covariance matrix $X^\top X$ in \cref{eq:ols-soln} can be expressed as a sum of rank-one terms each involving a single training example $\vx_i$, this can be used to construct an iterative algorithm for computing the closed-form ridge-regression solution.

\begin{thm}
\label{thm:clf}
A transformer can predict according to a single Sherman--Morrison update:
\begin{align}\label{eq:sherman}
  \vw' &= \left( \lambda I + \vx_i \vx_i^{\top} \right)^{-1}\vx_i y_i = \left(\frac{I}{\lambda}-\frac{\frac{I}{\lambda}\vx_i \vx_i^{\top}\frac{I}{\lambda}}{1+\vx_i^{\top}\frac{I}{\lambda}\vx_i}\right)\vx_i y_i %
\end{align}
with constant layers and $\mathcal{O}(d^2)$ hidden space. More precisely, there exists a set of transformer parameters $\theta$ such that,
given an input matrix of the form
in \cref{eq:thm-input},
the transformer's output matrix $H^{(L)}$ contains an entry equal to $\vw'^\top x_n$ (\cref{eq:sherman}) at the column index where $x_n$ is input.
\end{thm}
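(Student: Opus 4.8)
The plan is to prove this constructively by composing the four primitives from \cref{lem:lemma1}, exactly as in \cref{thm:sgd}, but now carrying $d\times d$ matrices through the hidden state --- which is what forces the $\mathcal{O}(d^2)$ width. The key observation is that the right-hand side of \cref{eq:sherman} is assembled entirely from matrix--vector products, one outer product, one inner product, one scalar division, and a single affine combination, each realizable by one \tmul, \tdiv, or \tadd layer acting column-wise, together with \tmov layers that route data between columns subject to the causal constraint $t \geq s$. I would first use \tmov to gather the operands into the single column holding $\vx_n$ (the prediction column), which is legal because that column comes last and \tmov copies only from left to right: $\vx_i$ and the scalar $y_i$ are copied into fresh row-blocks of that column, and the constant matrix $A^{-1} = I/\lambda$ is written into a $d\times d$ block by the bias term of an \tadd layer, since it is data-independent. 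After gathering, all remaining work is column-local, so the column-wise primitives suffice.

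Next I would realize the rank-one inverse update in a fixed sequence of layers. (i) Compute $\vp = A^{-1}\vx_i$ with one \tmul (a $d\times d$ by $d\times 1$ product). (ii) Duplicate $\vp$ within the column with \tmov and form the outer product $\vp\vp^\top = A^{-1}\vx_i\vx_i^\top A^{-1}$ with a second \tmul (a $d\times 1$ by $1\times d$ product, using that $A^{-1}$ is symmetric). (iii) Form the scalar $c = \vx_i^\top \vp = \vx_i^\top A^{-1}\vx_i$ with a \tmul ($1 \times d$ by $d \times 1$) and set the denominator $1 + c$ via the bias of an \tadd layer. (iv) Divide the stored matrix $\vp\vp^\top$ by $1+c$ using \tdiv; here I would use that $1 + \vx_i^\top A^{-1}\vx_i \geq 1 > 0$ because $A^{-1} = I/\lambda$ is positive definite for $\lambda > 0$, so the absolute value taken by \tdiv equals the value itself. (v) Combine with an \tadd layer ($W_1 = I$, $W_2 = -I$) to obtain $(\lambda I + \vx_i\vx_i^\top)^{-1} = A^{-1} - \frac{\vp\vp^\top}{1+c}$.

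Finally, I would multiply this matrix by $\vx_i$ (one \tmul) and by the scalar $y_i$ (one further \tmul with a $1\times 1$ operand) to obtain $\vw'$, and then contract $\vw'$ against $\vx_n$ --- already resident in the same column --- with a last \tmul realizing the dot product $\vw'^\top \vx_n$, writing the result to the designated output entry. Since every step is one of the primitives of \cref{lem:lemma1}, each costs a single decoder layer, and the number of steps is an absolute constant independent of $d$; the only quantities larger than $\mathcal{O}(d)$ are the $d\times d$ blocks for $A^{-1}$ and $\vp\vp^\top$, giving $\mathcal{O}(d^2)$ hidden width.

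The main obstacle is bookkeeping rather than any single hard computation: I must lay out the row-blocks for $\vx_i$, $y_i$, $I/\lambda$, $\vp$, $\vp\vp^\top$, $c$, and the running result so that each primitive reads and writes disjoint regions and never overwrites a value still needed downstream, all while respecting the causal masking that lets the $\vx_n$ column see $\vx_i$ and $y_i$ but not conversely. The two genuinely delicate points are (a) producing the outer product $\vp\vp^\top$, which requires a within-column copy of $\vp$ so that \tmul has two distinct operand blocks, and (b) the division step, which is only valid because the denominator is provably positive; verifying these, together with the trivial size preconditions of each \tmul, is the bulk of the formal argument.
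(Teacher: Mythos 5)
Your proposal is correct and takes essentially the same route as the paper's proof of \cref{thm:clf}: the same Sherman--Morrison decomposition into a constant-length chain of \tmov, \tmul, \tdiv, and \tadd primitives from \cref{lem:lemma1}, with $I/\lambda$ written by an \tadd bias, the outer product and the scalar $1+\vx_i^\top \tfrac{I}{\lambda}\vx_i$ formed by \tmul and \tadd, the division done by \tdiv, the rank-one correction applied by \tadd with $W_1=I, W_2=-I$, and a final \tmul giving $\vw'^\top \vx_n$, all in $\mathcal{O}(d^2)$ hidden space. The only differences are cosmetic bookkeeping: the paper performs the computation in the column holding $y_i$ and moves $\vw'$ forward into $\vx_n$'s column at the end (rather than gathering $\vx_i, y_i$ into $\vx_n$'s column up front), and it computes $\tfrac{I}{\lambda}\vx_i$ and $\vx_i^\top\tfrac{I}{\lambda}$ as two separate blocks instead of reusing $\vp$ by symmetry; your explicit check that the \tdiv denominator is positive (so the absolute value is harmless) is a point the paper leaves implicit.
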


\paragraph{Discussion.} 
There are various existing universality results for transformers \citep{yun2019transformers,wei2021statistically}, and for neural networks more generally \citep{hornik1989multilayer}.
These generally require very high precision, very deep models, or the use of an external ``tape'', none of which appear to be important for in-context learning in the real world. 
Results in this section establish sharper upper bounds on the necessary capacity required to implement learning algorithms specifically, bringing theory closer to the range where it can explain existing empirical findings. Different theoretical constructions, in the context of meta-learning, have been shown for linear self-attention models \citep{schlag2021linear}, or for other neural architectures such as recurrent neural networks \citep{kirsch2021meta}. We emphasize that \Cref{thm:sgd} and \Cref{thm:clf} each show the implementation of a single step of an iterative algorithm; these results can be straightforwardly generalized to the multi-step case by ``stacking'' groups of transformer layers. As described next, it is these iterative algorithms that capture the behavior of real learners.

\section{What computation does an in-context learner perform?}\label{sec:whatlearning}

The previous section showed that the building blocks for two specific procedures---gradient descent on the least-squares objective and closed-form computation of its minimizer---are 
implementable by transformer networks. These
constructions show that, in principle, 
fixed transformer parameterizations are expressive enough to simulate these learning algorithms. When trained on real datasets, however, in-context learners might implement other learning algorithms. 
In this section, we investigate the empirical properties of trained in-context learners in terms of their \emph{behavior}.
In the framework of \citeauthor{marr2010vision}'s (\citeyear{marr2010vision}) ``levels of analysis'', we aim to explain ICL at the \textbf{computational} level by identifying the \emph{kind of algorithms} to regression problems that transformer-based ICL implements. %

\subsection{Behavioral Metrics}

Determining which learning algorithms best characterize ICL predictions requires first quantifying the degree to which two predictors agree. We use two metrics to do so:

\paragraph{Squared prediction difference.}
Given any learning algorithm $\mathcal{A}$ that maps from a set of input--output pairs $D = [\vx_1, y_1, \ldots, \vx_n, y_n]$ to a predictor $f(\vx)=\mathcal{A}(D)(\vx)$, we define the squared prediction difference (SPD): 
\begin{equation}
    \operatorname{SPD}(\mathcal{A}_1, \mathcal{A}_2) = \mathop{\expect}_{\substack{D = [
    \vx_1, \ldots] \sim p(D) \\ \vx' \sim p(\vx)}} ~ (\mathcal{A}_1(D)(\vx') - \mathcal{A}_2(D)(\vx'))^2 ~ ,
\end{equation}
where $D$ is sampled as in \cref{eq:objective}. SPD measures agreement at the \emph{output} level, regardless of the algorithm used to compute this output.

\paragraph{Implicit linear weight difference.}
When ground-truth predictors all belong to a known, parametric function class (as with the linear functions here), we may also investigate the extent to which different learners agree on the parameters themselves. Given an algorithm $\mathcal{A}$, we sample a \emph{context} dataset $D$ as above, and an additional collection of unlabeled test inputs $D_{\mathcal{X}'} = \{\vx_{i}'\} $. We then compute $\mathcal{A}$'s prediction on each $x'_{i}$, yielding a \emph{predictor-specific dataset}
    $D_\mathcal{A} = \{(\vx_{i}', \hat{y}_{i})\} = \big\{\big(\vx_{i}, \mathcal{A}(D)(\vx_{i}')\big)\big\}$
encapsulating the function learned by $\mathcal{A}$. Next we compute the implied parameters:
\begin{equation}\label{eq:ilw}
    \hat{\vw}_\mathcal{A} = \argmin_\vw \sum_i (\hat{y}_i - \vw^\top \vx_i')^2 ~ .
\end{equation}
We can then quantify agreement between two predictors $\mathcal{A}_1$ and $\mathcal{A}_2$ by computing the distance between their implied weights in expectation over datasets:
\tnote{It this how you exactly do it? I thought when $\mathcal{A}_2$ is a known predictor, then you just use the learned w without going through the process of sampling the datasets and computing the implied parameters}\ekin{good catch, this is exactly what we do. We shoudd tell this.}
\begin{equation}
    \operatorname{ILWD}(\mathcal{A}_1, \mathcal{A}_2) = \expect_D\expect_{D_{\mathcal{X}'}}%
    \| \hat{\vw}_{\mathcal{A}_1} - \hat{\vw}_{\mathcal{A}_2} \|_2^2 ~ .
\end{equation}
When the predictors are not linear, ILWD measures the difference between the closest linear predictors (in the sense of \cref{eq:ilw}) to each algorithm. For algorithms that have linear hypothesis space (e.g. Ridge regression), we will use the actual value of $\hat{\vw}_{\mathcal{A}}$ instead of the estimated value.

\subsection{Experimental Setup} \label{sec:setup} We train a Transformer decoder autoregresively on the objective in \cref{eq:objective}. For all experiments, we perform a hyperparameter search over depth $L \in \{1, 2, 4, 8, 12, 16\}$, hidden size $W \in \{16, 32, 64, 256, 512, 1024\}$ and heads $M \in \{1, 2, 4, 8\}$. Other hyper-parameters are noted in \cref{app:hyper}. 
For our main experiments, we found that $L=16, H=512, M=4$ minimized loss on a validation set. We follow the training guidelines in \cite{Garg2022WhatCT}, and trained models for $500,000$ iterations, with each in-context dataset consisting of 40 $(\vx, y)$ pairs. For the main experiments we generate data according to $p(\vw) = \mathcal{N}(0, I)$ and $p(\vx) = \mathcal{N}(0, I)$.

\subsection{Results}\label{sec:results}

\begin{figure}\label{fig:fit}
 \captionsetup{belowskip=-7pt}
 \begin{subfigure}[t]{.6\textwidth}
  \includegraphics[width=\textwidth]{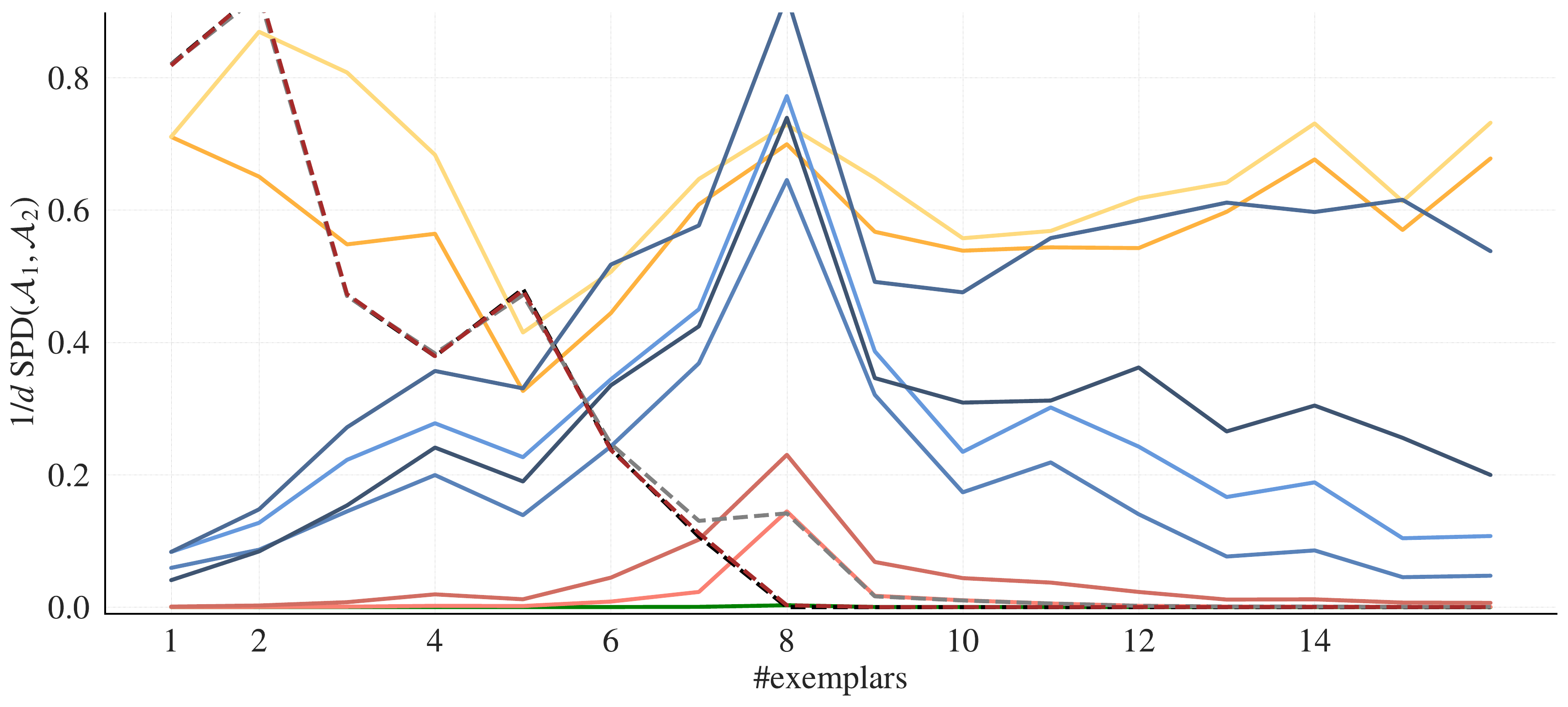}
  \caption{Predictor--ICL fit w.r.t. prediction differences.}
  \label{fig:preddivergence}
\end{subfigure}%
\medskip
\mbox{}\hfill  
\begin{subfigure}[t]{.4\textwidth}
  \includegraphics[width=\textwidth]{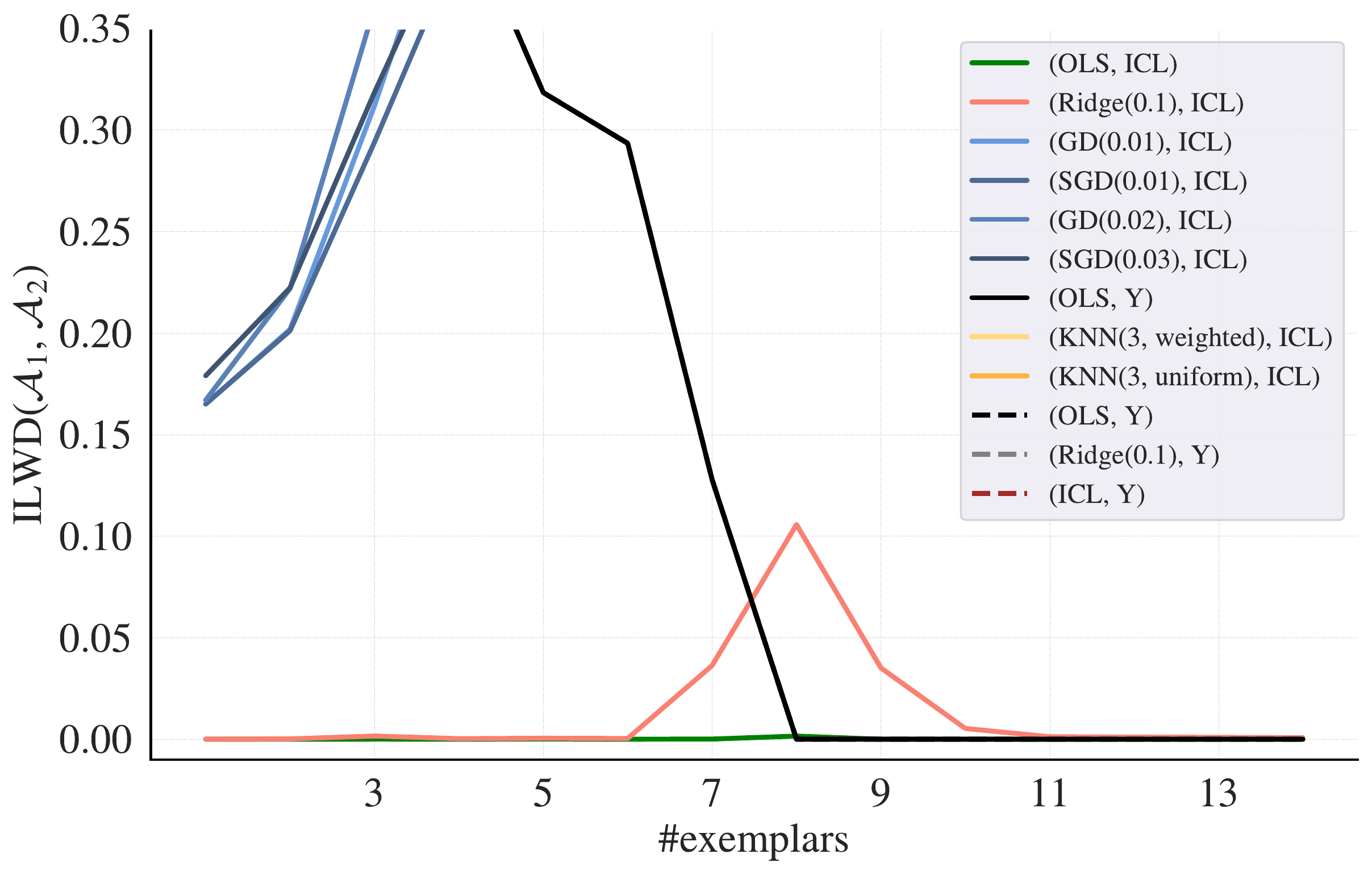}
  \caption{Predictor--ICL fit w.r.t implicit weights.}
  \label{fig:implicitw}
\end{subfigure}
  \caption{\textbf{Fit between ICL and standard learning algorithms:} We plot (dimension normalized) SPD and ILWD values between textbook algorithms and ICL on noiseless linear regression with $d=8$. GD($\alpha$) denotes one step of batch gradient descent and SGD($\alpha$) denotes one pass of stochastic gradient descent with learning rate $\alpha$. Ridge($\lambda$) denotes Ridge regression with regularization parameter $\lambda$. Under both evaluations, in-context learners \emph{agree} closely with ordinary least squares, and are significantly less well approximated by other solutions to the linear regression problem.
    \tnote{the xlabel and ylabel in the figure are a bit small. }
  }
\label{fig:figspd}
\end{figure}

\paragraph{ICL matches ordinary least squares predictions on noiseless datasets.}

We begin by comparing a $(L=16, H=512, M=4)$ transformer against a variety of reference predictors:
\tnote{shouldn't the reference predictors be mentioned in Section 4.2?}\ekin{I don't have strong opinion, but this was where they start looking Figure 1, so this was ok to me. I can change if you think the 4.2 better.}
\begin{itemize}
    \item \textbf{$k$-nearest neighbors}: In the \emph{uniform} variant, models predict $\hat{y}_i = \frac{1}{3} \sum_{j} y_j$, where $j$ is the top-3 closest data point to $x_i$ where $j<i$. In the \emph{weighted} variant, a weighted average  $\hat{y}_i \propto \frac{1}{3} \sum_{j} |x_i-x_j|^{-2} y_j$ is calculated, normalized by the total weights of the $y_j$s.
    \item \textbf{One-pass stochastic gradient descent}: $\hat{y}_i=\vw_i^{\top}x_i$ where $\vw_i$ is obtained by stochastic gradient descent on the previous examples with batch-size equals to 1: $\vw_i \!=\! \vw_{i-1} - 2\alpha (x_{i-1}^{\top}\vw^\top_{i-1} x_{i-1} - x_{i-1}^{\top}y_{i-1} + \lambda w_{i-1})$. 
    \item \textbf{One-step batch gradient descent}: $\hat{y}_i=\vw_i^{\top}x_i$ where $\vw_i$ is obtained by one of step gradient descent on the batch of previous examples: $\vw_i \!=\! \vw_{0} - 2\alpha (X^{\top}\vw^\top X - X^{\top}Y + \lambda w_0)$. 
    \tnote{I think it should be $\lambda w_0$ (see my early comments)}\ekin{fixed}
    \item \textbf{Ridge regression}: We compute $\hat{y}_i = \vw'^{\top}x_i$ where  $ \vw'^{\top} =(X^\top X + \lambda I)^{-1} X^\top Y$. We denote the case of $\lambda=0$ as \textbf{OLS}.
\end{itemize}
The agreement between the transformer-based ICL and these predictors is shown in \cref{fig:figspd}. As can be seen, there are clear differences in fit to predictors: for almost any number of examples, normalized SPD and ILWD are small between the transformer and  OLS predictor (with squared error less than 0.01), while other predictors (especially nearest neighbors) agree considerably less well.

When the number of examples is less than the input dimension $d=8$, the linear regression problem is under-determined, in the sense that multiple linear models can exactly fit the in-context training dataset. In these cases, OLS regression selects the \emph{minimum-norm} weight vector, and (as shown in \cref{fig:figspd}), the in-context learner's predictions are reliably consistent with this minimum-norm predictor. Why, when presented with an ambiguous dataset, should ICL behave like this particular predictor?
One possibility is that, because the weights used to generate the training data are sampled from a Gaussian centered at zero, ICL learns to output the \emph{minimum Bayes risk} solution when predicting under uncertainty (see \cite{muller2021transformers}). 
Building on these initial findings,
our next set of experiments investigates whether ICL is behaviorally equivalent to Bayesian inference more generally.

\paragraph{ICL matches the minimum Bayes risk predictor on noisy datasets.}
To more closely examine the behavior of ICL algorithms under uncertainty, we add noise to the training data: now we present the in-context dataset as a sequence:
    $[\vx_1, f(\vx_1)+\epsilon_1, \ldots, \vx_n, f(\vx_n)+\epsilon_n]$
where each $\epsilon_i \sim \mathcal{N}(0, \sigma^2)$. Recall that ground-truth weight vectors are themselves sampled from a Gaussian distribution; together, this choice of prior and noise mean that the learner cannot be certain about the target function with any number of examples.
\begin{figure}
\captionsetup{belowskip=-7pt}
  \centering
  \includegraphics[width=0.7\textwidth]{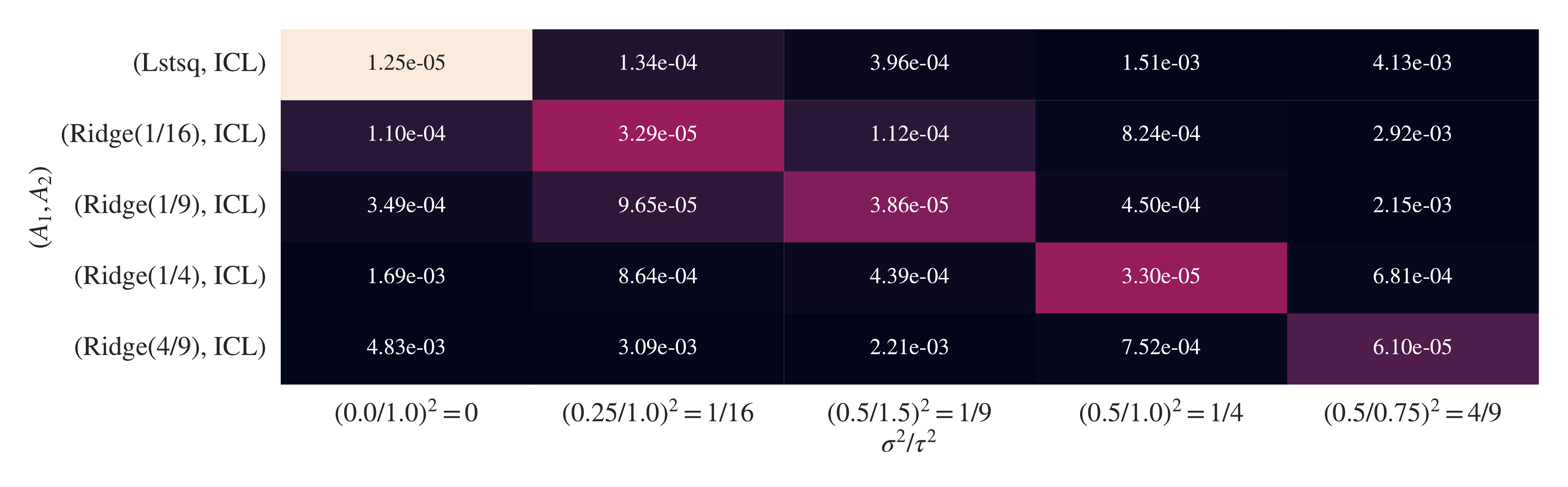}
  \label{fig:varytau}
    \caption{\textbf{ICL under uncertainty:} With problem dimension $d=8$, and for different values of prior variance $\tau^2$ and data noise $\sigma^2$, we display (dimension-normalized) MSPD values for each predictor pair, where MSPD is the average SPD value over underdetermined region of the linear problem. Brightness is proportional with $\frac{1}{\text{MSPD}}$. ICL most closely follows the minimum-Bayes-risk Ridge regression output for all  $\frac{\sigma^2}{\tau^2}$ values.} %
    \label{fig:noise}
\end{figure}

Standard Bayesian statistics gives that the optimal predictor for minimizing the loss in~\cref{eq:objective} is:
\tnote{recovered the displayed equation because otherwise this equation doesn't have a number and your later reference is off}\ekin{thanks}
\begin{align}
\hat{y} = \expect[y \vert x, D]. \label{eqn:2}
\end{align}
This is because, conditioned on $x$ and $D$, the scalar $\hat{y}(x,D) := \expect[y \vert x, D]$ is the minimizer of the loss $\expect[(y - \hat{y})^2\vert x, D]$, and thus the estimator $\hat{y}$ 
is the minimzier of $\expect[(y - \hat{y})^2]= \expect_{x,D}[\expect[(y - \hat{y})^2\vert x, D]]$.
For linear regression with Gaussian priors and Gaussian noise, the Bayesian estimator in~\cref{eqn:2} has a closed-form expression:
\begin{align}
    \hat{\vw} = \Big(X^\top X + \frac{\sigma^2}{\tau^2}I\Big)^{-1} X^\top Y~; \qquad\quad
    \hat{y} = \hat{\vw}^\top \vx ~ .
\end{align}

Note that this predictor has the same form as the ridge predictor from \cref{sec:regression}, with the regularization parameter set to $\frac{\sigma^2}{\tau^2}$.
In the presence of noisy labels, does ICL match this Bayesian predictor?
We explore this by varying both the dataset noise $\sigma^2$ and the prior variance $\tau^2$ (sampling $\vw \sim \mathcal{N}(0, \tau^2))$.
For these experiments, the SPD values between the in-context learner and various regularized linear models
is shown in \cref{fig:noise}. As predicted, as variance increases, the value of the ridge parameter that best explains ICL behavior also increases. For all values of $\sigma^2, \tau^2$, the ridge parameter that gives the best fit to the transformer behavior is also the one that minimizes Bayes risk. 
These experiments clarify the finding above, showing that ICL in this setting behaviorally matches minimum-Bayes-risk predictor. We also note that when the noise level $\sigma \rightarrow 0^+$, the Bayes predictor converges to the ordinary least square predictor. Therefore, the results on noiseless datasets studied in the beginning paragraph of this subsection can be viewed as corroborating the finding here in the setting with $\sigma\rightarrow 0^+$.

\paragraph{ICL exhibits algorithmic phase transitions as model depth increases.}

The two experiments above evaluated extremely high-capacity models in which (given findings in \cref{sec:theory}) computational constraints are not likely to play a role in the choice of algorithm implemented by ICL. But what about smaller models---does the size of an in-context learner play a role in determining the learning algorithm it implements?
To answer this question, we run two final behavioral experiments: one in which we vary the \emph{hidden size} (while optimizing the depth and number of heads as in \cref{sec:setup}), then vary the \emph{depth} of the transformer (while optimizing the hidden size and number of heads). These experiments are conducted without dataset noise.

Results are shown in \cref{fig:phaseshift}. When we vary the depth, learners occupy three distinct regimes: very shallow models (1L) are best approximated by a single step of gradient descent (though not well-approximated in an absolute sense). Slightly deeper models (2L-4L) are best approximated by ridge regression, while the deepest (+8L) models match OLS as observed in \cref{fig:phaseshift}. Similar phase shift occurs when we vary hidden size in a 16D problem. Interestingly, we can read hidden size requirements to be close to ridge-regression-like solutions as  $H\geq16$ and  $H\geq32$ for 8D and 16D problems respectively, suggesting that ICL discovers more efficient ways to use available hidden state than our theoretical constructions requiring $\mathcal{O}(d^2)$.
Together, these results show that ICL \emph{does not necessarily} involve minimum-risk prediction. However, even in models too computationally constrained to perform Bayesian inference, alternative interpretable computations can emerge.
\begin{figure}[t]
    \captionsetup{belowskip=-3pt}
     \captionsetup[subfigure]{aboveskip=-1pt,belowskip=-0.5pt}
\centering
    \begin{subfigure}[t]{\textwidth}
    \center
    \includegraphics[width=0.85\textwidth]{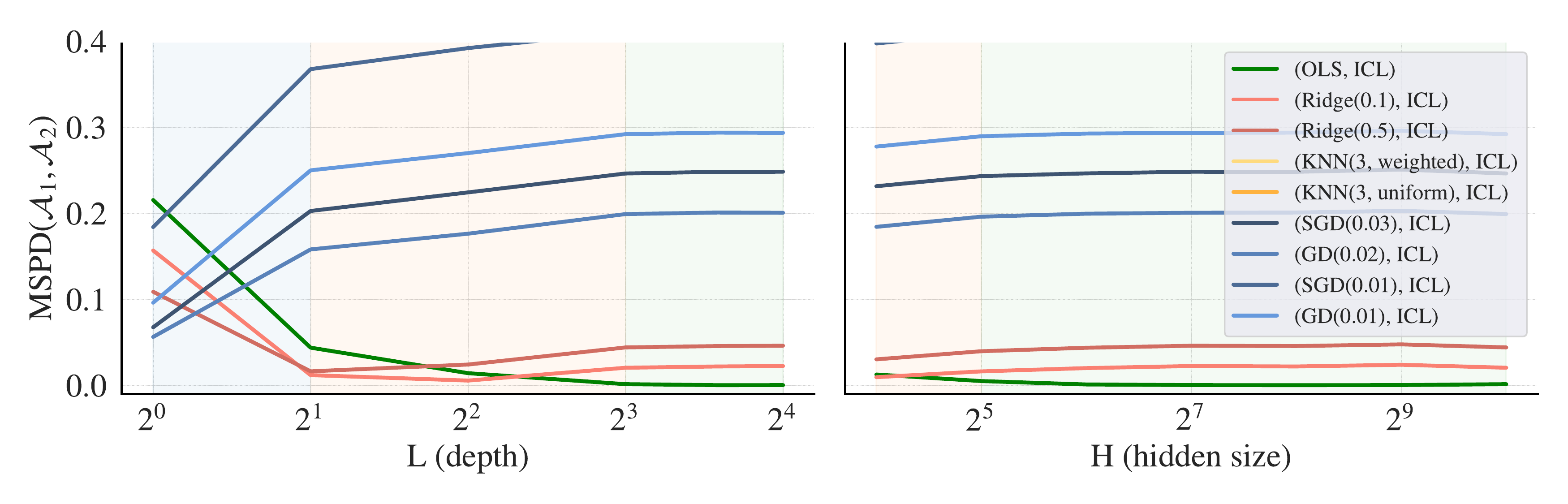}
    \caption{Linear regression problem with $d=8$}
    \label{fig:phaseshift8d}
    \end{subfigure}
     \medskip
    \mbox{}\hfill 
     \begin{subfigure}[t]{\textwidth} 
       \center
        \includegraphics[width=0.85\textwidth ]{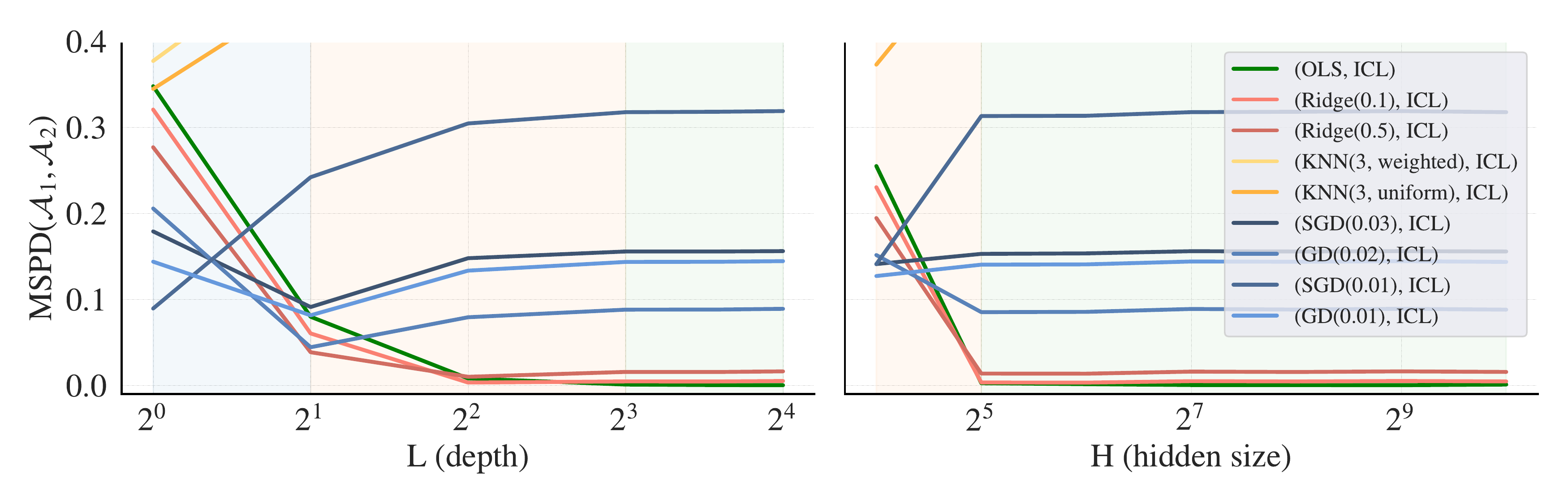}
           \caption{Linear regression problem with $d=16$}
                   \label{fig:phaseshift16d}
    \end{subfigure} 
    \caption{\textbf{Computational constraints on ICL:} We show 
    SPD averaged over underdetermined region of the linear regression problem. In-context learners behaviorally match ordinary least squares predictors if there is enough number of layers and hidden sizes. When varying model depth (left background), algorithmic ``phases'' emerge: models transition between being closer to gradient descent, (red background), ridge regression (green background), and OLS regression (blue).
    }
    \label{fig:phaseshift}
\end{figure}

\section{Does ICL encode meaningful intermediate quantities?}\label{sec:probe}

\cref{sec:whatlearning} showed that transformers are a good fit to standard learning algorithms (including those constructed in \cref{sec:theory}) at the \emph{computational} level. But these experiments leave open the question of how these computations are implemented at the \textbf{algorithmic} level. How do transformers arrive at the solutions in \cref{sec:whatlearning}, and what quantities do they compute along the way?
Research on extracting precise algorithmic descriptions of learned models is still in its infancy \citep{cammarata2020thread,mu2020compositional}.
However, we can gain insight into ICL 
by inspecting learners' intermediate states: asking \emph{what} information is encoded in these states, and \emph{where}.

\begin{figure}[t]
    \captionsetup{belowskip=-12pt}
    \begin{subfigure}[t]{\textwidth}
    \centering
 \includegraphics[width=0.85\textwidth]{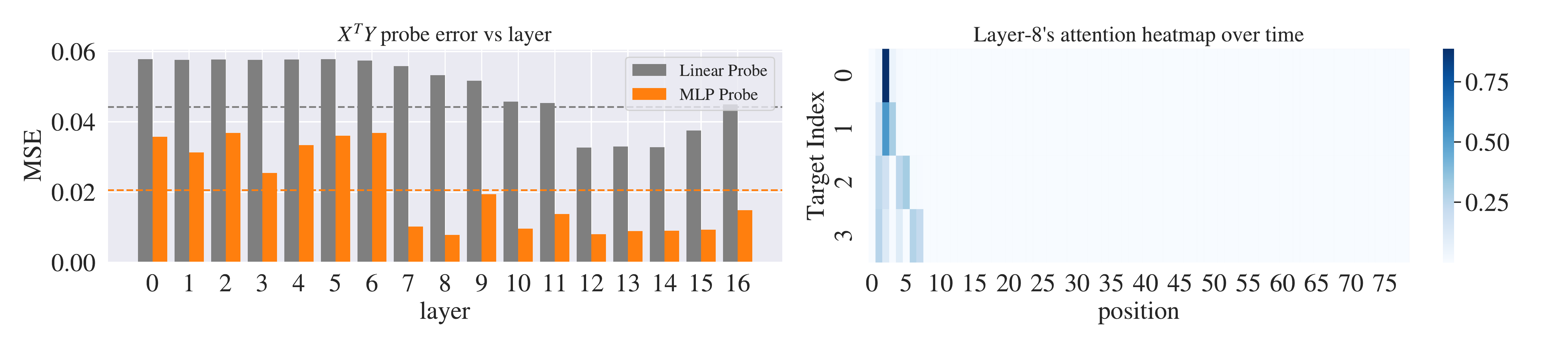} 
    \label{fig:xtyprobe}
     \end{subfigure}
     \begin{subfigure}[t]{\textwidth}
    \centering
\includegraphics[width=0.85\textwidth]{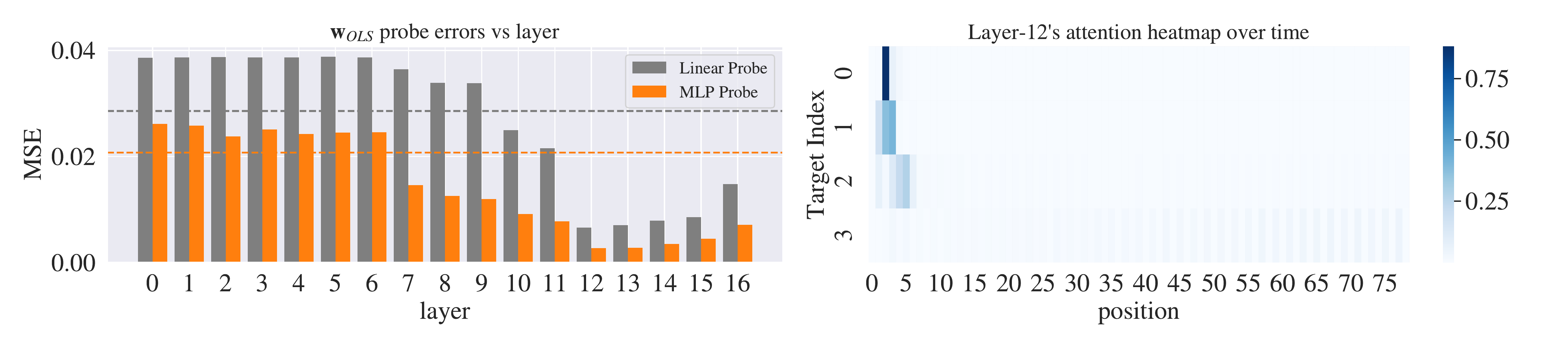}
    \label{fig:wprobe}
\end{subfigure}
     \caption{\textbf{Probing results on $d=4$ problem:} Both moments $X^\top Y$ (top) and least-square solution $\vw_{\textrm{OLS}}$ (middle) are recoverable from learner representations. Plots in the left column show the accuracy of the probe for each target in different model layers. 
     Dashed lines show the best probe accuracies obtained on a \emph{control task} featuring a fixed weight vector $w=\mathbf{1}$. Plots in the right column show the attention heatmap for the best layer's probe, with the number of input examples on the x-axis. 
     The value of the target after $n$ exemplars is decoded primarily from the representation of $y_n$, or, after $n=d$ examplars, uniformly from $y_{n\geq4}$.
     }
     \label{fig:probe}
\end{figure}
To do so, we identify two intermediate quantities that we expect to be computed by gradient descent and ridge-regression variants:  %
the \textbf{moment vector} $X^\top Y$ and the (min-norm) least-square estimated \textbf{weight vector} $\vw_{\textrm{OLS}}$,  each calculated after feeding $n$ exemplars.
We take a trained in-context learner, freeze its weights, then train an auxiliary \textbf{probing} model \citep{alain2016understanding} to attempt to recover the target quantities from the learner's hidden representations.
Specifically, the probe model takes hidden states at a layer $H^{(l)}$ as input, then outputs the prediction for target variable. We define a probe with \textbf{position-attention} that computes (\cref{app:probe_details}):
\begin{align}
    \boldsymbol{\alpha} &= \operatorname{softmax}(\vs_{v}^{}) \\
    \hat{\vv} &= \operatorname{FF_v}(\boldsymbol{\alpha}^{\top}W_vH^{(l)})
\end{align}
We train this probe to minimize the squared error between the predictions and targets $\vv$:
    $\mathcal{L}(\vv,\hat{\vv}) = |\vv - \hat{\vv}|^2$.
The probe performs two functions simultaneously: its prediction error on held-out representations determines the \emph{extent to which the target quantity is encoded}, while its attention mask, $\boldsymbol{\alpha}$ identifies the \emph{location in which the target quantity is encoded}. %
For the FF term, we can insert the function approximator of our choosing; by changing this term we can determine the \emph{manner in which the target quantity is encoded}---e.g.\ if FF is a linear model and the probe achieves low error, then we may infer that the target is encoded linearly.

For each target, we train a separate probe for the value of the target on \emph{each prefix of the dataset}: i.e.\ one probe to decode the value of $\vw$ computed from a single training example, a second probe to decode the value for two examples, etc.
Results are shown in \cref{fig:probe}. For both targets, a 2-layer MLP probe outperforms a linear probe, meaning that these targets are encoded nonlinearly (unlike the constructions in \cref{sec:theory}).
However, probing also reveals similarities. Both targets are decoded accurately deep in the network (but inaccurately in the input layer, indicating that probe success is non-trivial.) Probes attend to the correct timestamps when decoding them. As in both constructions, $X^\top Y$ appears to be computed first, becoming predictable by the probe relatively early in the computation (layer 7);
while $\vw$ becomes predictable later (around layer 12). For comparison, we additionally report results on a \emph{control task} in which the transformer predicts $y$s generated with a fixed weight vector $w = \mathbf{1}$ (so no ICL is required). Probes applied to these models perform significantly worse at recovering moment matrices (see \cref{app:probe_details} for details).

\section{Conclusion}

We have presented a set of experiments characterizing the computations underlying in-context learning of linear functions in transformer sequence models. We showed that these models are capable in theory of implementing multiple linear regression algorithms, that they empirically implement this range of algorithms (transitioning between algorithms depending on model capacity and dataset noise), and finally that they can be probed for intermediate quantities computed by these algorithms.

While our experiments have focused on the linear case, they can be extended to many learning problems over richer function classes---e.g.\ to a network whose initial layers perform a non-linear feature computation. Even more generally, the experimental methodology here could be applied to larger-scale examples of ICL, especially language models, to determine whether their behaviors are also described by interpretable learning algorithms. While much work remains to be done, our results offer initial evidence that the apparently mysterious phenomenon of in-context learning can be understood with the standard ML toolkit, and that the solutions to learning problems discovered by machine learning researchers may be discovered by gradient descent as well.

\section*{Acknowledgements}
We thank Evan Hernandez, Andrew Drozdov, Ed Chi for their feedback on the early drafts of this paper. At MIT, Ekin Aky\"urek is supported by an MIT-Amazon ScienceHub fellowship and by the MIT-IBM Watson AI Lab. 

\bibliography{iclr2022_conference_v2}
\bibliographystyle{iclr2022_conference}
\newpage
\appendix
\section{Theorem 1}\label{app:Thm1}

\label{app:theory}

The operations for 1-step SGD with single exemplar can be expressed as following chain (please see proofs for the Transformer implementation of these operations (\cref{lem:lemma1}) in \cref{app:lemma1}):
\begin{itemize}[leftmargin=*]
\item $\tmov(;1, 0, (1, 1+d), (1, 1+d))$ \hfill (move $\vx$)
\item $\tadd(;(1, 1+d), (), (1+d, 2 + d), W_1=\vw)$  \hfill ($\vw^{\top}\vx$)
\item $\tadd(;(1+d, 2+d), (0, 1), (2+d, 3+d), W_1=I, W_2=-I)$ \hfill ($\vw^{\top}\vx-y$)
\item $\tmul(;d, 1, 1, (1, 1+d), (2+d, 3+d), (3+d, 3+2d))$ \hfill ($\vx(\vw^{\top}\vx-y)$)
\item $\tadd(;(), (), (3+2d, 3+3d),  b=\vw,)$ \hfill (write $\vw$)
\item $\tadd(;(3+d, 3+2d), (3+2d, 3+3d), (3+3d, 3+4d), W_1=I, W_2=-\lambda)$ \hfill ($\vx(\vw^{\top}\vx-y) - \lambda \vw$)
\item $\tadd(;(3+2d, 3+3d), (3+3d, 3+4d), (3+2d, 3+3d), W_1=I, W_2=-2\alpha, )$ \hfill \textcolor{blue}{({$\vw'$})}
\item $\tmov(;2, 1, (3+2d, 3+3d), (3+2d, 3+3d))$ \hfill (move $\vw'$)
\item $\tmul(;1, d, 1, (3+2d, 3+3d), (1, 1+d), (3+3d, 4+3d))$ \hfill (${\vw'}^{\top}x_2$)
\end{itemize}

This will map:
$$\begin{bmatrix}
0 & y_1 & 0 \\
x_1 & 0 & x_2 \\
 &  &  \\
 &  &  \\
 &  &  \\
 &  &  \\
 &  &  \\
 &  &  \\
 &  &  \\
 &  &  \\
\end{bmatrix} \mapsto \begin{bmatrix} 0 & y_1 & 0 \\
x_1 & x_1 & x_2 \\
w^{\top}x_1 & w^{\top}x_1 & w^{\top}x_2   \\
w^{\top}x_1  & w^{\top}x_1 - y & w^{\top}x_2  \\
x_1 w^{\top}x_1 & x_1(w^{\top}x_1 - y) & x_2 w^{\top}x_1 \\
w & w & w \\
x_1 w^{\top}x_1 -\lambda w & x_1(w^{\top}x_1 - y)  -\lambda w  & x_2 w^{\top}x_1  -\lambda w  \\
w - 2\alpha (x_1 w^{\top}x_1 -\lambda w) & w'  & w - 2\alpha (x_2 w^{\top}x_1  -\lambda w)  \\
w - 2\alpha (x_1 w^{\top}x_1 -\lambda w) & w'  & w' \\
(w - 2\alpha (x_1 w^{\top}x_1 -\lambda w))^{\top}x1 &{w'}^{\top}x_1  & \mathbf{{w'}^{\top}x_2}  \end{bmatrix}$$

We can verify the chain of operator step-by-step. In each step, we show only the non-zero rows.
\begin{itemize}[leftmargin=*]
\item $\tmov(;1, 0, (1, 1+d), (1, 1+d))$ \hfill (move $\vx$)
\end{itemize}
$\begin{bmatrix}
0 & y_1 & 0 \\
x_1 & 0 & x_2 \\
\end{bmatrix} \mapsto \begin{bmatrix}
0 & y_1 & 0 \\
x_1 & x_1 & x_2 \\
\end{bmatrix}$ 
\begin{itemize}[leftmargin=*]
\item $\tadd(;(1, 1+d), (), (1+d, 2 + d), W_1=\vw)$  \hfill ($\vw^{\top}\vx$)
\end{itemize}
$\begin{bmatrix}
0 & y_1 & 0 \\
x_1 & x_1 & x_2 \\
\end{bmatrix} \mapsto \begin{bmatrix}
0 & y_1 & 0 \\
x_1 & x_1 & x_2 \\
w^{\top}x_1 & w^{\top}x_1 & w^{\top}x_2   \\
\end{bmatrix}$ 
\begin{itemize}[leftmargin=*]
\item $\tadd(;(1+d, 2+d), (0, 1), (2+d, 3+d), W_1=I, W_2=-I)$ \hfill ($\vw^{\top}\vx-y$)
\end{itemize}
$\begin{bmatrix}
0 & y_1 & 0 \\
x_1 & x_1 & x_2 \\
w^{\top}x_1 & w^{\top}x_1 & w^{\top}x_2   \\
\end{bmatrix} \mapsto \begin{bmatrix}
0 & y_1 & 0 \\
x_1 & x_1 & x_2 \\
w^{\top}x_1 & w^{\top}x_1 & w^{\top}x_2   \\
w^{\top}x_1  & w^{\top}x_1 - y_1 & w^{\top}x_2  \\\end{bmatrix} $
\begin{itemize}[leftmargin=*]
\item $\tmul(;d, 1, 1, (1, 1+d), (2+d, 3+d), (3+d, 3+2d))$ \hfill ($\vx(\vw^{\top}\vx-y)$)
\end{itemize}
$\begin{bmatrix}
0 & y_1 & 0 \\
x_1 & x_1 & x_2 \\
w^{\top}x_1 & w^{\top}x_1 & w^{\top}x_2   \\
w^{\top}x_1  & w^{\top}x_1 - y_1 & w^{\top}x_2  \\\end{bmatrix} \mapsto  \begin{bmatrix}
0 & y_1 & 0 \\
x_1 & x_1 & x_2 \\
w^{\top}x_1 & w^{\top}x_1 & w^{\top}x_2   \\
w^{\top}x_1  & w^{\top}x_1 - y & w^{\top}x_2  \\
x_1 w^{\top}x_1 & x_1(w^{\top}x_1 - y) & x_2 w^{\top}x_1 \\
\end{bmatrix}$
\begin{itemize}[leftmargin=*]
\item $\tadd(;(), (), (3+2d, 3+3d),  b=\vw,)$ \hfill (write $\vw$)
\end{itemize}
$\begin{bmatrix}
0 & y_1 & 0 \\
x_1 & x_1 & x_2 \\
w^{\top}x_1 & w^{\top}x_1 & w^{\top}x_2   \\
w^{\top}x_1  & w^{\top}x_1 - y & w^{\top}x_2  \\
x_1 w^{\top}x_1 & x_1(w^{\top}x_1 - y) & x_2 w^{\top}x_1 \\
\end{bmatrix} \mapsto  \begin{bmatrix}
0 & y_1 & 0 \\
x_1 & x_1 & x_2 \\
w^{\top}x_1 & w^{\top}x_1 & w^{\top}x_2   \\
w^{\top}x_1  & w^{\top}x_1 - y & w^{\top}x_2  \\
x_1 w^{\top}x_1 & x_1(w^{\top}x_1 - y) & x_2 w^{\top}x_1 \\
w & w & w \\
\end{bmatrix} $
\begin{itemize}[leftmargin=*]
\item $\tadd(;(3+d, 3+2d), (3+2d, 3+3d), (3+3d, 3+4d), W_1=I, W_2=-2\lambda)$ \hfill ($\vx(\vw^{\top}\vx-y) - 2\lambda \vw$)
\end{itemize}
$\begin{bmatrix}
0 & y_1 & 0 \\
x_1 & x_1 & x_2 \\
w^{\top}x_1 & w^{\top}x_1 & w^{\top}x_2   \\
w^{\top}x_1  & w^{\top}x_1 - y & w^{\top}x_2  \\
x_1 w^{\top}x_1 & x_1(w^{\top}x_1 - y) & x_2 w^{\top}x_1 \\
w & w & w \\
\end{bmatrix} \mapsto \begin{bmatrix}
0 & y_1 & 0 \\
x_1 & x_1 & x_2 \\
w^{\top}x_1 & w^{\top}x_1 & w^{\top}x_2   \\
w^{\top}x_1  & w^{\top}x_1 - y & w^{\top}x_2  \\
x_1 w^{\top}x_1 & x_1(w^{\top}x_1 - y) & x_2 w^{\top}x_1 \\
w & w & w \\
x_1 w^{\top}x_1 -\lambda w & x_1(w^{\top}x_1 - y)  -\lambda w  & x_2 w^{\top}x_1  -\lambda w  \\
\end{bmatrix}$
\begin{itemize}[leftmargin=*]
\item $\tadd(;(3+2d, 3+3d), (3+3d, 3+4d), (3+2d, 3+3d), W_1=I, W_2=-2\alpha, )$ \hfill \textcolor{blue}{({$\vw'$})}
\end{itemize}
$\begin{bmatrix}
0 & y_1 & 0 \\
x_1 & x_1 & x_2 \\
w^{\top}x_1 & w^{\top}x_1 & w^{\top}x_2   \\
w^{\top}x_1  & w^{\top}x_1 - y & w^{\top}x_2  \\
x_1 w^{\top}x_1 & x_1(w^{\top}x_1 - y) & x_2 w^{\top}x_1 \\
w & w & w \\
x_1 w^{\top}x_1 -\lambda w & x_1(w^{\top}x_1 - y)  -\lambda w  & x_2 w^{\top}x_1  -\lambda w  \\
\end{bmatrix} \mapsto \\ \begin{bmatrix}
0 & y_1 & 0 \\
x_1 & x_1 & x_2 \\
w^{\top}x_1 & w^{\top}x_1 & w^{\top}x_2   \\
w^{\top}x_1  & w^{\top}x_1 - y & w^{\top}x_2  \\
x_1 w^{\top}x_1 & x_1(w^{\top}x_1 - y) & x_2 w^{\top}x_1 \\
w - 2\alpha (x_1 w^{\top}x_1 -\lambda w) & w'  & w - 2\alpha (x_2 w^{\top}x_1  -\lambda w)  \\
x_1 w^{\top}x_1 -\lambda w & x_1(w^{\top}x_1 - y)  -\lambda w  & x_2 w^{\top}x_1  -\lambda w  \\
\end{bmatrix}$
\begin{itemize}[leftmargin=*]
\item $\tmov(;2, 1, (3+2d, 3+3d), (3+2d, 3+3d))$ \hfill (move $\vw'$)
\end{itemize}
$\begin{bmatrix}
0 & y_1 & 0 \\
x_1 & x_1 & x_2 \\
w^{\top}x_1 & w^{\top}x_1 & w^{\top}x_2   \\
w^{\top}x_1  & w^{\top}x_1 - y & w^{\top}x_2  \\
x_1 w^{\top}x_1 & x_1(w^{\top}x_1 - y) & x_2 w^{\top}x_1 \\
w & w & w \\
w - 2\alpha (x_1 w^{\top}x_1 -\lambda w) & w'  & w - 2\alpha (x_2 w^{\top}x_1  -\lambda w)  \\
x_1 w^{\top}x_1 -\lambda w & x_1(w^{\top}x_1 - y)  -\lambda w  & x_2 w^{\top}x_1  -\lambda w  \\
\end{bmatrix} \mapsto \begin{bmatrix}
0 & y_1 & 0 \\
x_1 & x_1 & x_2 \\
w^{\top}x_1 & w^{\top}x_1 & w^{\top}x_2   \\
w^{\top}x_1  & w^{\top}x_1 - y & w^{\top}x_2  \\
x_1 w^{\top}x_1 & x_1(w^{\top}x_1 - y) & x_2 w^{\top}x_1 \\
w - 2\alpha (x_1 w^{\top}x_1 -\lambda w) & w'  & w - 2\alpha (x_2 w^{\top}x_1  -\lambda w)  \\
x_1 w^{\top}x_1 -\lambda w & x_1(w^{\top}x_1 - y)  -\lambda w  & x_2 w^{\top}x_1  -\lambda w  \\
w - 2\alpha (x_1 w^{\top}x_1 -\lambda w) & w'  & w' \\
\end{bmatrix}$

\begin{itemize}[leftmargin=*]
\item $\tmul(;1, d, 1, (3+2d, 3+3d), (1, 1+d), (3+3d, 4+3d))$ \hfill (${\vw'}^{\top}x_2$)
\end{itemize}
$\begin{bmatrix}
0 & y_1 & 0 \\
x_1 & x_1 & x_2 \\
w^{\top}x_1 & w^{\top}x_1 & w^{\top}x_2   \\
w^{\top}x_1  & w^{\top}x_1 - y & w^{\top}x_2  \\
x_1 w^{\top}x_1 & x_1(w^{\top}x_1 - y) & x_2 w^{\top}x_1 \\
w - 2\alpha (x_1 w^{\top}x_1 -\lambda w) & w'  & w - 2\alpha (x_2 w^{\top}x_1  -\lambda w)  \\
x_1 w^{\top}x_1 -\lambda w & x_1(w^{\top}x_1 - y)  -\lambda w  & x_2 w^{\top}x_1  -\lambda w  \\
w - 2\alpha (x_1 w^{\top}x_1 -\lambda w) & w'  & w' \\
\end{bmatrix} \mapsto \begin{bmatrix} 0 & y_1 & 0 \\
x_1 & x_1 & x_2 \\
w^{\top}x_1 & w^{\top}x_1 & w^{\top}x_2   \\
w^{\top}x_1  & w^{\top}x_1 - y & w^{\top}x_2  \\
x_1 w^{\top}x_1 & x_1(w^{\top}x_1 - y) & x_2 w^{\top}x_1 \\
w - 2\alpha (x_1 w^{\top}x_1 -\lambda w) & w'  & w - 2\alpha (x_2 w^{\top}x_1  -\lambda w)  \\
x_1 w^{\top}x_1 -\lambda w & x_1(w^{\top}x_1 - y)  -\lambda w  & x_2 w^{\top}x_1  -\lambda w  \\
w - 2\alpha (x_1 w^{\top}x_1 -\lambda w) & w'  & w' \\
(w - 2\alpha (x_1 w^{\top}x_1 -\lambda w))^{\top}x1 &{w'}^{\top}x_1  & \mathbf{{w'}^{\top}x_2}  \end{bmatrix}$

We obtain the updated prediction in the last hidden unit of the third time-step. \qed

\paragraph{Generalizing to multiple steps of SGD.}
Since $\vw'$ is written in the hidden states, we may repeat this iteration to obtain $\hat{y}_3 = \vw{''}^{\top}\vx_3$ where $\vw''$ is the one step update $\vw' - 2\alpha (\vx_2 \vw'^\top \vx_2 - \vy_2 \vx_2 + \lambda \vw$, requiring a total of $\mathcal{O}(n)$ layers for a single pass through the dataset where $n$ is the number of examplers.

As an empirical demonstration of this procedure, the accompanying code release contains a reference implementation of SGD defined in terms of the base primitive provided in an anymous links \url{https://icl1.s3.us-east-2.amazonaws.com/theory/{primitives,sgd,ridge}.py} (to preserve the anonymity we did not provide the library dependencies). This implementation predicts $\hat{y}_n = \vw_{n}^{\top}\vx_n$, where $\vw_{n}$ is the weight vector resulting from $n-1$ consecutive SGD updates on previous examples. It can be verified there that the procedure requires $\mathcal{O}(n+d)$ hidden space. Note that, it is not $\mathcal{O}(nd)$ because we can reuse spaces for the next iteration for the intermediate variables, an example of this performed in 
({$\vw'$}) step above highlighted with blue color.
\section{Theorem 2}\label{app:Thm2}
We provide a similar construction to Theorem 1 (please see proofs for the Transformer implementation of these operations in \cref{app:lemma1}, specifically for \tdiv see \cref{app:sdiv})
\begin{itemize}[leftmargin=*]
    \item $\tmov(;1, 0, (1, 1+d), (1, 1+d))$ \hfill (move $\vx_1$)
    \item $\tmul(;d,1,1, (1, 1+d), (0, 1), (1+d, 1+2d))$  \hfill ($\vx_1 y$)
    \item $\tadd(;(), (), (1+2d, 1+2d + d^2), b=\frac{I}{\lambda})$ \hfill ($A_0^{-1} =\frac{I}{\lambda}$)
    \item $\tmul(;d,d,1, (1+2d, 1+2d+d^2), (1, 1+d), (1+2d+d^2, 1+3d+d^2) )$ \hfill ($A_0^{-1}\vu =\frac{I}{\lambda}\vx_1$)
    \item $\tmul(;1,d,d,(1, 1+d), (1+2d, 1+2d+d^2), (1+3d+d^2, 1+4d+d^2))$ \hfill ($\vv A_0^{-1} =\vx_1^{\top}\frac{I}{\lambda}$)
    \item $\tmul(;d,1,d, (1+2d+d^2, 1+3d+d^2), (1+3d+d^2, 1+4d+d^2), (1+4d+d^2, 1+4d+2d^2))$ \hfill ($A_0^{-1}\vu \vv A_0^{-1}=\frac{I}{\lambda}\vx_1\vx_1^{\top}\frac{I}{\lambda}$)
    \item $\tmul(;1, d, 1, (1+3d+d^2, 1+4d+d^2), (1, 1+d), (1+4d+2d^2, 2+4d+2d^2))$ \hfill ($\vv^{\top}A_0^{-1}\vu=\vx_1^{\top}\frac{I}{\lambda}\vx_1$)
    \item $\tadd(;(1+4d+2d^2, 2+4d+2d^2), (), (1+4d+2d^2, 2+4d+2d^2), W_1=1, b=1, )$ \hfill ($1 + \vv^{\top}A_0^{-1}\vu=1 + \vx_1^{\top}\frac{I}{\lambda}\vx_1$)
    \item $\tdiv(;(1+4d+d^2, 1+4d+2d^2), 1+4d+2d^2, (2+4d+2d^2, 2+4d+3d^2))$  \hfill (right term)
    \item $\tadd(;(1+2d, 1+2d+d^2), (2+4d+2d^2, 2+4d+3d^2), (1+2d, 1+2d + d^2), W_1=I, W_2=-I)$ \hfill ($A_1^{-1}$)
    \item $\tmul(;d, d, 1, (1+2d, 1+2d+d^2), (1, 1+d), (2+4d+3d^2, 2+5d+3d^2))$ \hfill ($A_1^{-1}\vx_1$)
    \item $\tmul(;d, 1, 1, (2+4d+3d^2, 2+5d+3d^2), (0, 1), (2+4d+3d^2, 2+5d+3d^2))$ \hfill ($A_1^{-1}\vx_1y_1$)
    \item $\tmov(;2, 1, (2+4d+3d^2, 2+5d+3d^2), (2+4d+3d^2, 2+5d+3d^2))$ \hfill (move $\vw'$)
    \item $\tmul(;d,1,1 (2+4d+3d^2, 2+5d+3d^2), (1, 1+d), (2+5d+3d^2, 3+5d+3d^2))$ \hfill (${\vw'}^{\top}x_2$)
\end{itemize} 
Note that, in contrast to \cref{app:Thm1}, we need $\mathcal{O}(d^2)$ space to implement matrix multiplications. Therefore over-all required hidden size is  $\mathcal{O}(d^2)$  \qed

As Theorem 1, generalizing it to multiple iterations will at least require $\mathcal{O}(n)$ layers, as we repeat the process for the next examplar.

\section{Lemma 1}\label{app:lemma1}
All of the operators mentioned in this lemma share a common computational structure, and can in fact be implemented as special cases of a ``base primitive'' we call \texttt{RAW} (for Read-Arithmetic-Write). This operator may also be useful for future work aimed at implementing other algorithms.

The structure of our proof of \cref{lem:lemma1} is as follows:

\begin{enumerate}
\item Motivation of the base primitive $\texttt{RAW}$.
\item Formal definition of $\texttt{RAW}$.
\item Definition of \tdot, \tadd, \tmov in terms of $\texttt{RAW}$. 
\item Implementation of \texttt{RAW} in terms of transformer parameters.  
\item Brief discussion of how to parallelize $\texttt{RAW}$, making it possible to implement
\tmul.
\item Seperate proof for \tdiv by utilizing layer norm.
\end{enumerate}

\subsection{RAW Operator: Intuition}
At a high level, all of the primitives in \cref{lem:lemma1} involve a similar sequence of operations:

\paragraph{1) Operators read some hidden units from the current or previous timestep:}
\tdot and \tadd read from two subsets of indices in the current hidden state $\vh_t$\footnote{For notational convenience, we will use $\vh$ to refer to sequence of hidden states (instead of $H$ in \cref{eq:tf-att}.), $\vh_{t'}$ will be the hidden state at time step $t'$}, while \tmov reads from a previous hidden state $\vh_{t'}$. This selection is straightforwardly implemented using the attention component of a transformer layer.

We may notate this reading operation as follows:
\begin{equation}
    \underbrace{\left(\frac{1}{W_{a}} \sum_{k \in K(i)} \vh^{(l)}_{k}[\tr]\right)}_{\text{Read with Attention}} ~ . 
\end{equation}
Here $\bf r$ denotes a list of indice to read from, and $K$ denotes a map from \emph{current timesteps} to \emph{target timesteps}.
For convenience, we use Numpy-like notation to denote indexing into a vector with another vector:
\begin{definition}[Bracket]
$\vx[.]$ is Python index notation where the resulting vector, $\vx'=\vx[\tr]$: 
$$\vy_j=\vx_{r_j} \quad j=1, .... |\tr|$$
\end{definition}

The first step of our proof below shows that the attention output $\va^{(l)}$ can compute the expression above.

\paragraph{2) Operators perform element-wise arithmetic between the quantity read in step 1 and another set of entries from the current timestep:}
This step takes different forms for \tadd and \tmul (\tmov ignores values at the current timestep altogether).

\begin{equation}
    \underbrace{\left(\frac{W_{a}}{|K(i)|} \sum_{k \in K(i)} \vh^{(l)}_{k}[\tr]\right)}_{\text{Read with Attention}} ~\odot ~W \vh_i^{(l)}[s] \quad \text{(multiplicative form)}
\end{equation}

\begin{equation}
    \underbrace{\left(\frac{W_{a}}{|K(i)|} \sum_{k \in K(i)} \vh^{(l)}_{k}[\tr]\right)}_{\text{Read with Attention}} ~ + ~W \vh_i^{(l)}[\ts] \quad \text{(additive form)}
\end{equation}

The second step of the proof below computes these operations inside the MLP component of the transformer layer.

\paragraph{3) Operators reduce, then write to the current hidden state}
Once the underlying element-wise operation calculated, the operator needs to \textbf{write} these values to the some indices in current hidden state, defined by a list of indices $\bf w$. Writing might be preceded by a \textbf{reduction} state (e.g.\ for computing dot products), which can be expressed generically as a linear operator $W_o$. The final form of the computation is thus:
\begin{equation}\label{eq:raw}
    \vh^{(l+1)}_i[\tw] \gets \left( W_o \overbrace{ \underbrace{\left(\frac{W_{a}}{|K(i)|} \sum_{k \in K(i)} \vh^{(l)}_{k}[\tr]\right)}_{\text{Read with Attention}} ~\ostar~ W \vh_i^{(l)}[\ts]}^{\text{Elementwise operation}}\right)  %
\end{equation}
Here, $\gets$ means that the other indices $i \notin w$ are copied from $h^{l-1}$.

\subsection{RAW Operator Definition}

We denote this ``master operator'' as \texttt{RAW}: %

\begin{definition}
$\texttt{RAW}(\vh; \ostar, \ts, \tr, \tw, W_{o}, W_{a}, W, K)$ is a function $\mathbb{R}^{H \times T} \mapsto \mathbb{R}^{H \times T}$. It is parameterized by an elementwise operator $\ostar \in \{+, \odot\}$, three matrices $W \in \mathbb{R}^{d \times |\ts|}$, $W_a \in \mathbb{R}^{d \times |\tr|},  W_{o} \in \mathbb{R}^{|w| \times d}$, three index sets \ts, \tr, and \tw, and a timestep map $K : \mathbb{Z}^+ \mapsto ({\mathbb{Z}^+}^{*})$.
Given an input matrix $\vh$, it outputs a matrix with entries:
\begin{align} \label{eq:rawdef}
        \vh^{(l+1)}_{i,\tw} &= \left(W_o \left(\left(\frac{W_{a}}{|K(i)|} \sum_{k \in K(i)} \vh^{(l)}_{k}[\tr]\right) \ostar W \vh^{(l)}_i[\ts]\right)\right) \quad i=1,...,T; \\
        \vh^{(l+1)}_{i,j \notin \tw} &= \vh^{(l)}_{i,j \notin \tw} \quad i=1,...,T;
\end{align}
We additionally require that $j \in K(i) \implies j < i$ (since self-attention is causal.)
\end{definition}

(For simplicity, we did not include a possible bias term in linear projections $W_o$, $W_a$, $W$, we can always assume the accompanying bias parameters $\vb_0, \vb_a, \vb$ when needed)

\subsection{Reducing Lemma 1 operators to RAW operator}

Given this operator, we can define each primitive in \cref{lem:lemma1} using a single $\operatorname{RAW}$ operator, except the $\textbf{\tmul}$ and $\textbf{\tdiv}$. Instead of the matrix multiplication operator $\textbf{\tmul}$, we will first show the dot product $\textbf{\tdot}$ (a special case of $\textbf{\tmul}$), then later in the proof, we will argue that we can parallelize these dot products in \cref{app:parallelize} to obtain $\textbf{\tmul}$. We will show how to implement $\tdiv$ separately in \cref{app:sdiv}.
\begin{lem}\label{lem:rawtopp}
We can define $\textbf{\tmov}$, $\textbf{\tadd}$ operator, and the dot product case of $\textbf{\tmul}$ in \cref{lem:lemma1} by using a single RAW operator 
\begin{align*}
    &\textbf{\tdot}(\vh;(i, j), (i', j'), (i'', j'')) = \textbf{\tmul}(\vh;1, |i-j|, 1, (i, j), (i', j'), (i'', i''+1)) \\ & = \texttt{RAW}(\vh; \odot,  W=I, W_{a}=I, W_o=\mathbb{1}^{\top}, \ts=(i, j), \tr=(i', j'),  \tw=(i'', i''+1), K=\{(t, \{t\})\xspace \forall_t\}) \\[.75em]
    &\textbf{\tadd}(\vh; (i, j), (i', j'), (i'', j''), W_1, W_2, b) \\ & = \texttt{RAW}(\vh; +, W=W_1, W_{a}=W_2, W_o=I, \vb_0=b, \ts=(i, j), \tr=(i', j'), \tw=(i'', i''+1), K=\{(t, \{t\})\xspace\forall_t\}) \\[.75em] 
    &\textbf{\tmov}(\vh; s, t, (i, j), (i', j')) \\ & = \texttt{RAW}(\vh; +,  W=0, W_{a}=I, W_o=I, \ts=(), \tr=(i', j'), \tw=(i, j), K=\{(t, \{s\})\})
\end{align*}
\end{lem}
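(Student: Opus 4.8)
The plan is to verify each of the three identities by pure substitution: plug the listed parameters into the defining equation of $\texttt{RAW}$ (\cref{eq:rawdef}) and check that the resulting formula reproduces the behavior specified for the operator in \cref{lem:lemma1}. Since the transformer-implementability of $\texttt{RAW}$ is deferred to a later part of the proof, nothing architectural is needed here; the content of \cref{lem:rawtopp} is entirely algebraic, namely that each of $\textbf{\tmov}$, $\textbf{\tadd}$, and the dot-product case of $\textbf{\tmul}$ is \emph{literally} a special case of $\texttt{RAW}$.

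First I would handle $\textbf{\tdot}$ and $\textbf{\tadd}$ together, since both use the ``self'' timestep map $K(t)=\{t\}$. With $|K(i)|=1$ the attention-read collapses to a single-timestep read $W_a\,\vh_i^{(l)}[\tr]$. For $\textbf{\tdot}$, taking $W_a=I$, $W=I$, and $\ostar=\odot$, the elementwise stage produces the Hadamard product $\vh_i[(i,j)]\odot\vh_i[(i',j')]$ of the two sub-vectors (this is where the size precondition $j-i=j'-i'$ is used), and then $W_o=\mathbb{1}^\top$ contracts it to the scalar $\sum_k \vh_i[(i,j)]_k\,\vh_i[(i',j')]_k$, i.e. the dot product, written to the single index $\tw$. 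For $\textbf{\tadd}$, switching to $\ostar=+$ with $W=W_1$, $W_a=W_2$ and bias $b$ gives $W_1\vh_{i:j}+W_2\vh_{i':j'}+b$ directly, and since no reduction is wanted $W_o=I$ writes this vector into the target block $\tw$. In both cases the second line of \cref{eq:rawdef} copies all untouched entries.

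Next I would treat $\textbf{\tmov}$, whose only structural difference is that the copy is \emph{across} timesteps. This is captured by the constant timestep map $K(t)=\{s\}$, so the read stage returns $\vh_s[(i',j')]$ from the fixed source column $s$. Setting $W=0$ (with $\ts=()$) kills the current-timestep contribution, $\ostar=+$ leaves the read value unchanged, and $W_o=I$ writes it into the target indices of the current column. The one point that genuinely requires checking is causality: $\texttt{RAW}$ admits a map $K$ only when $j\in K(i)\implies j<i$ (self-attention is causal), so the constant map $K(t)=\{s\}$ is legal exactly for columns $t\ge s$ — which is precisely the precondition $t\ge s$ already built into $\textbf{\tmov}$ in \cref{lem:lemma1}.

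The main obstacle, such as it is, is bookkeeping rather than mathematics: one must match the shapes of $W$, $W_a$, and $W_o$ to the index sets so that all matrix products and the elementwise $\odot$/$+$ are well-defined (in particular $W_o=\mathbb{1}^\top$ must have length equal to that of the contracted sub-vector for $\textbf{\tdot}$, and the two sub-vectors fed to $\odot$ must be of equal length), and to confirm in each case that the causal-masking constraint on $K$ is respected. Once these routine checks are in place, the three identities follow by inspection.
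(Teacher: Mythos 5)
Your proposal is correct and matches the paper's own proof, which simply states that the lemma ``follows immediately by substituting parameters into \cref{eq:rawdef}''; your substitution checks (collapsing the attention read when $K(t)=\{t\}$, the constant map $K(t)=\{s\}$ for \tmov, and the causality and shape preconditions) are exactly the routine verifications that one-line proof leaves implicit.
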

\begin{proof}
Follows immediately by substituting parameters into \cref{eq:rawdef}.
\end{proof}

\subsection{Implementing RAW}

It remains only to show:

\begin{lem}\label{app:lemmaraw}
A single transformer layer can implement the \texttt{RAW} operator: there exist settings of transformer parameters such that, given an arbitrary hidden matrix $\vh$ as input, the transformer computes $\vh'$ (\cref{eq:rawdef}) as output.
\end{lem}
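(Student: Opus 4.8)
The plan is to realize the three conceptual phases of \texttt{RAW} — \emph{read}, \emph{elementwise combine}, and \emph{reduce-and-write} — using the two sublayers of a single decoder layer: the self-attention of \cref{eq:tf-att} performs the read, and the feed-forward block of \cref{eq:tf-mlp} performs the elementwise combination, the linear reduction by $W_o$, and the write, while its residual connection supplies the copy of every coordinate outside $\tw$. I would reserve a block of currently-unused ``scratch'' coordinates of the hidden dimension to hold the intermediate read, and I would assume the hidden states carry a positional encoding so that attention can be made position-selective according to the timestep map $K$.

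First, for the read, I would take $W^V$ to be a selection matrix extracting the rows indexed by $\tr$, and choose $W^Q, W^K$ so that the pre-softmax logit $(W^Q\vh_i)^\top(W^K\vh_k)$ is a common constant for every target $k \in K(i)$ and tends to $-\infty$ otherwise (this is read off the positional part of $\vh_k$). In the limit of large logit scale the softmax collapses to the uniform distribution on $K(i)$, so a single head outputs exactly $\frac{1}{|K(i)|}\sum_{k\in K(i)}\vh^{(l)}_k[\tr]$, and I set $W^F$ to apply $W_a$ and route the result into the scratch rows, leaving all other coordinates of $\va_i$ zero. Since the feed-forward block adds $\va_i + \vh_i^{(l)}$ through its residual path, after this stage the activation $\va_i+\vh_i^{(l)}$ carries the original hidden state together with $W_a\cdot(\text{read})$ in scratch. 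For the maps actually needed downstream ($K(i)=\{i\}$ or $K(i)=\{s\}$) this is just hard self- or fixed-position attention; for $K(i)=\{1,\dots,i\}$ one even gets exact uniform weights from zero logits.

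Second, I would use the feed-forward block to form $W\vh_i^{(l)}[\ts]$, combine it with the read via $\ostar$, apply $W_o$, and write the result to $\tw$, as prescribed by \cref{eq:rawdef}. The layer normalization $\lambda$ is handled by keeping a few further coordinates that pin the per-column mean and variance (e.g.\ a large-magnitude balanced pair), so that $\lambda$ restricts to a \emph{known} affine map on the active coordinates that $W_2$ can pre-compensate for. The same feed-forward write is also made to subtract $\va_i$ from the scratch rows and, when $\tw$ overlaps occupied rows (as in the reused-space step of \cref{app:Thm1}), to subtract the old value $[\va_i+\vh_i^{(l)}]_{\tw}$, so that after the residual add the coordinates outside $\tw$ are restored exactly and those inside $\tw$ equal the intended value. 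In the additive case $\ostar=+$ the target is affine in $\va_i+\vh_i^{(l)}$ and all of this is linear; I realize each linear map through a scaled-GeLU gadget, using $\tfrac{1}{\epsilon}\big(\sigma(\epsilon x)-\sigma(-\epsilon x)\big)\to x$ as $\epsilon\to 0$.

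The multiplicative case $\ostar=\odot$ is the main obstacle, since elementwise products are genuinely nonlinear and a fixed GeLU cannot produce them exactly. I would reduce products to squares by polarization, $uv=\tfrac14\big((u+v)^2-(u-v)^2\big)$, and extract squares from the second-order term of GeLU: because $\sigma(x)=\tfrac{x}{2}+\tfrac{1}{\sqrt{2\pi}}x^2+O(x^3)$ near the origin, the symmetric combination $\tfrac{\sqrt{2\pi}}{2\epsilon^2}\big(\sigma(\epsilon x)+\sigma(-\epsilon x)\big)\to x^2$ as $\epsilon\to 0$, with the linear parts cancelling. Scaling inputs by $\epsilon$ in $W_2$ and outputs by $\epsilon^{-2}$ in $W_1$ then computes every required product through the MLP. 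Consequently a single layer realizes \texttt{RAW} to arbitrary precision, exactly in the joint limit of large attention logit scale and $\epsilon\to 0$; this dependence on limits is exactly the precision cost of the fixed GeLU nonlinearity, the layer norm, and the softmax, consistent with the caveats discussed in \cref{sec:theory}.
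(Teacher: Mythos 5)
Your proposal is correct and takes essentially the same route as the paper's proof: self-attention with position-encoded keys/queries and saturated softmax performs the read over $K(i)$, the feed-forward block performs the elementwise combination, the reduction by $W_o$, and the write (with explicit subtraction terms so the residual connection restores coordinates outside $\tw$), LayerNorm is neutralized by pinning it with large balanced entries, and elementwise products come from the second-order Taylor term of the GeLU under input down-scaling. The only differences are gadget-level: you pass linear quantities through via $\sigma(\epsilon x)-\sigma(-\epsilon x)$ and build products by polarization from a square gadget $\sigma(\epsilon x)+\sigma(-\epsilon x)$, where the paper uses the large-bias bypass $\operatorname{GeLU}(N+x)-N\approx x$ and the direct identity $\sqrt{\pi/2}\left(\operatorname{GeLU}(x+y)-\operatorname{GeLU}(x)-\operatorname{GeLU}(y)\right)\approx xy$, and you absorb the residual at the scratch rows by assuming they start empty, whereas the paper cancels it with a second attention head.
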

Our proof proceeds in stages.
We begin by providing specifying initial embedding and positional embedding layers, constructing inputs to the main transformer layer with necessary positional information and scratch space.
Next, we prove three useful procedures for bypassing (or exploiting) non-linearities in the feed-forward component of the transformer.
Finally, we provide values for remaining parameters, showing that we can implement the \emph{Elementwise} and \emph{Reduction} steps described above.

\subsubsection{Embedding layers} \label{app:prelimlemma1}

\paragraph{Embedding Layer for Initialization:}
Rather than inserting the input matrix $\vh$ directly into the transformer layer, we assume (as is standard) the existence of a linear \textbf{embedding} layer. We can set this layer to pad the input, providing extra scratch space that will be used by later steps of our implementation.

We define the embedding matrix $W_e$ as:
\begin{equation}
W_{e} = \begin{pmatrix}
\begin{matrix} I^{(d+1) \times (d+1)} & \mathbf{0} \\ \mathbf{0}  & \mathbf{0}  \end{matrix}
\end{pmatrix}
\end{equation}
Then, the embedded inputs will be
\begin{align}
\tilde{\vx}_i &= W_{e} \vx_i = [0, \vx_i, \mathbf{0}_{\scriptscriptstyle  H-d-1}]^{\top} \\
\tilde{\vy}_i &= W_{e} \vy_i = [\vy_i, \mathbf{0}_{\scriptscriptstyle H-1}]^{\top}
\end{align}
\paragraph{Position Embeddings for Attention Manipulation:}
Implementing \texttt{RAW} ultimately requires controlling which position attends to which position in each layer. For example, we may wish to have layers in which each position attends to the previous position only, or in which even positions attends to other even positions. We can utilize position embeddings, $\vp_i$, to control attention weights. In a standard transformer, the position embedding matrix is a constant matrix that is added to the inputs of the transformer after embedding layer (before the first layer), so the actual input to to the transformer  is:
\begin{equation}
\vh_i^0 = \tilde{\vx}_i + \vp_i  \\
\end{equation}
We will use these position embeddings to encode the timestep map K.
To do this, we will use $2p$ units per layer ($p$ will be defined momentarily). $p$ units will be used to encode attention \emph{keys} $\mathbf{k}_{i}$, and the other $p$ will be used to encode \emph{queries} $\mathbf{q}_{i}$.

We define the position embedding matrix as follows:
\begin{equation}\label{eq:posemb}
   p_i = [\mathbf{0}_{\scriptscriptstyle d+1},\vk_i^0, \vq_i^0, \dots, \vk_i^{(L)}, \vq_i^{(L)},  \mathbf{0}_{\scriptscriptstyle H-2pT-1}]^{\top} 
\end{equation} 

With $K$ encoded in positional embeddings, the transformer matrices $W_Q$ and $W_K$ are easy to define: they just need to retrieve the corresponding embedding values:

\begin{align}
W_{K}^l &= \begin{pmatrix}
\begin{matrix} 
\mathbf{0} &  \dots &   &  \\ 
 &  \ddots &    &   \\ 
 & I^{p \times p}  &  \mathbf{0}^{p \times p} &  \\
  &  & & \dots &  \\
    &  & \vdots & &  \\
\end{matrix} 
\end{pmatrix} &&
W_{Q}^l = \begin{pmatrix}
\begin{matrix} 
\mathbf{0} &  \dots &   &  \\ 
 &  \ddots &    &   \\ 
 & \mathbf{0}^{p \times p} &  I^{p \times p}  &  \\
  &  & & \dots &  \\
    &  & \vdots & &  \\
\end{matrix} 
\end{pmatrix}
\end{align}

The constructions used in this paper rely on two specific timestep maps $K$, each of which can be implemented compactly in terms of $\vk$ and $\vq$:

\paragraph{Case 1: Attend to previous token.} This can be constructed by setting:
\begin{align*}
    \mathbf{k}_{i} &= \ve_{i} \\
    \mathbf{q}_{i} &= N\ve_{i-1}
\end{align*}
where $N$ is a sufficiently large number. In this case, the output of the attention mechanism will be:
\begin{align*}
     \mathbf{\boldsymbol{\alpha}} &= \mathrm{softmax}\Big( (W_j^Q \vh_i)^{\top} (W_j^K \vh_{:i}) \Big) \\
                     &= \mathrm{softmax}\Big(\mathbf{q}_{i}^{\top} [\vk_1, \ldots, \vk_i] \Big) \\
                     &= \mathrm{softmax}\Big([0, \dots,N, 0]) \\
                     &= [0, \dots, \underbrace{1}_{(i-1)}, 0]
\end{align*}

\paragraph{Case 2: Attend to a single token.}
For simpler patterns, such as attention to a specific token:
\begin{equation}\label{app:kiex}
    K(i) = \begin{cases} \{t\} \quad i \geq t \\ \{\} \quad i < t \end{cases}
\end{equation}
only 1 hidden unit is required. We set:
\begin{align*}
\vk_i &= \begin{cases}
    -N \quad i \neq t\\
     N  \quad i = t\\
\end{cases} \\
    \mathbf{q}_{i} &= N
\end{align*}
from which it can be verified (using the same procedure as in Case 1) that the desired attention pattern is produced.

\paragraph{Intricacy: How can K be empty?}
We can also cause $K(i)$ to attend to an empty set by assuming the $\operatorname{softmax}$ has extra (``imaginary'') timestep obtained by prepending a 0 to attention vector pot-hoc \citep{chen2020mask}. 

Cumulatively, the parameter matrices defined in this subsection implement the \emph{Read with Attention} component of the \texttt{RAW} operator.

\subsubsection{Handling \& Utilizing Nonlinearities}

The \tmul operator requires elementwise multiplication of quantities stored in hidden states. While transformers are often thought of as only straightforwardly implementing \emph{affine} transformations on hidden vectors, their nonlinearities in fact allow elementwise multiplication to a high degree of approximation.
We begin by observing the following property of the $\operatorname{GeLU}$ activation function in the MLP layers of the  Transformer network:

\begin{lem}\label{lem:geluformul}The GeLU nonlinearity can be used to perform multiplication: specifically,
\begin{equation}
\sqrt{\frac{\pi}{2}}(\operatorname{GeLU}(x+y) - \operatorname{GeLU}(x) - \operatorname{GeLU}(y)) = xy + \mathcal{O}(x^3 + y^3) 
\end{equation}
\end{lem}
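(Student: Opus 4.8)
The plan is to prove the identity by Taylor-expanding $\operatorname{GeLU}$ about the origin and tracking terms up to second order. Writing the activation as $\operatorname{GeLU}(x) = \tfrac{x}{2}\bigl(1 + \operatorname{erf}(x/\sqrt 2)\bigr)$ and using $\operatorname{erf}(z) = \tfrac{2}{\sqrt\pi}\bigl(z - z^3/3 + \cdots\bigr)$, I would first establish the local expansion $\operatorname{GeLU}(x) = \tfrac{x}{2} + \tfrac{1}{\sqrt{2\pi}}\, x^2 + \mathcal{O}(x^3)$. The features to record are that there is no constant term, the linear coefficient is $\tfrac12$, and the quadratic coefficient is $\tfrac{1}{\sqrt{2\pi}}$ (equivalently $\operatorname{GeLU}''(0) = \sqrt{2/\pi}$); the remainder is $\mathcal{O}(x^3)$ by Taylor's theorem.

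Next I would substitute this expansion for each of the three evaluations in $\operatorname{GeLU}(x+y) - \operatorname{GeLU}(x) - \operatorname{GeLU}(y)$. The constant terms are absent and the linear terms cancel exactly, since $\tfrac{x+y}{2} - \tfrac x2 - \tfrac y2 = 0$. The quadratic terms leave only the cross term, because $(x+y)^2 - x^2 - y^2 = 2xy$, contributing $\tfrac{1}{\sqrt{2\pi}}\cdot 2xy = \sqrt{2/\pi}\, xy$. Multiplying the whole expression by $\sqrt{\pi/2}$ then converts this leading term into exactly $xy$, since $\sqrt{\pi/2}\cdot\sqrt{2/\pi}=1$.

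The remaining work is to control the error. The combined remainder has the form $R(x,y) = \mathcal{O}(|x+y|^3) + \mathcal{O}(|x|^3) + \mathcal{O}(|y|^3)$, and I would absorb the first piece using the elementary bound $(|x|+|y|)^3 \le 4\bigl(|x|^3 + |y|^3\bigr)$ (from the power-mean inequality, or Young's inequality applied to the mixed monomials $|x|^2|y|$ and $|x||y|^2$), which yields $R(x,y) = \mathcal{O}(x^3 + y^3)$; multiplying by the fixed constant $\sqrt{\pi/2}$ preserves this order, giving $xy + \mathcal{O}(x^3+y^3)$ as claimed.

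I expect the only subtle point to be the error bookkeeping: the genuine leading error is in fact quartic, since $\operatorname{GeLU}$ has vanishing odd-order derivatives beyond the linear term (the function $\tfrac x2\operatorname{erf}(x/\sqrt2)$ is even), so there is no $x^3$ contribution and the stated $\mathcal{O}(x^3+y^3)$ is a safe but non-tight bound. The proof only needs the cruder third-order Taylor remainder, so the single nontrivial estimate is rewriting $\mathcal{O}(|x+y|^3)$ as $\mathcal{O}(x^3+y^3)$. Getting the quadratic coefficient $\tfrac{1}{\sqrt{2\pi}}$ exactly right, so that the prefactor $\sqrt{\pi/2}$ cleans up to $1$, is the other place to be careful.
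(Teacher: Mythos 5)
Your proof is correct and takes essentially the same approach as the paper's: Taylor-expand $\operatorname{GeLU}$ at the origin, observe that the linear terms cancel in $\operatorname{GeLU}(x+y)-\operatorname{GeLU}(x)-\operatorname{GeLU}(y)$ so that only the quadratic cross term $\tfrac{2}{\sqrt{2\pi}}\,xy=\sqrt{2/\pi}\,xy$ survives, and scale by $\sqrt{\pi/2}$. If anything, your bookkeeping is more careful than the paper's: the paper's displayed expansion in \cref{eq:gelutaylor} misstates the quadratic coefficient as $\sqrt{2/\pi}$ instead of the correct $\tfrac{1}{\sqrt{2\pi}}$ (a factor-of-two slip that silently cancels in its next line), whereas you get the coefficient right and also make the error term rigorous via the explicit bound $(|x|+|y|)^3\le 4\bigl(|x|^3+|y|^3\bigr)$.
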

\begin{proof}
A standard implementation of the GeLU nonlinearity is defined as follows:
\begin{equation}
    \operatorname{GeLU}(x)=\frac{x}{2}\left(1+\tanh \left(\sqrt{\frac{2}{\pi}}\left(x+0.044715 x^3\right)\right)\right) ~ .
\end{equation}
Thus
\begin{align}
&\operatorname{GeLU}(x) = \frac{x}{2} + \sqrt{\frac{2}{\pi}} x^2 + \mathcal{O}(x^3) \label{eq:gelutaylor} \\
&\operatorname{GeLU}(x+y) - \operatorname{GeLU}(x) - \operatorname{GeLU}(y) = \sqrt{\frac{2}{\pi}} xy +  \mathcal{O}(x^3 + y^3) \\ 
& \implies xy \approx \sqrt{\frac{\pi}{2}}(GeLU(x+y) - GeLU(x) - GeLU(y))
\end{align}
For small $x$ and $y$, the third-order term vanishes.
By scaling inputs down by a constant before the GeLU layer, and scaling them up afterwards, models may use the GeLU operator to perform elementwise multiplication.\end{proof}

We can generalize this proof to other smooth functions as we discussed further in [TODO REF]. Previous work also shows, in practice, Transformers with ReLU activation utilize non-linearities to get the multiplication in other settings.

When implementing the $\tadd$ operator, we have the opposite problem: we would like the output of addition to be transmitted \emph{without} nonlinearities to the output of the transformer layer.
Fortunately, for large inputs, the GeLU nonlinearity is very close to linear; to bypass it it suffices to add to inputs a large $N$:
\begin{lem} The GeLU nonlinearity can be bypassed: specifically,
\begin{equation}
    \operatorname{GeLU}(N+x) - N \approx x  \quad N \gg 1
\end{equation}
\end{lem}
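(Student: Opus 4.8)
The plan is to exploit the fact that the GeLU gating factor saturates to one for large positive arguments, so that GeLU is asymptotically the identity on the positive half-line. Writing $\operatorname{GeLU}(z) = z\,g(z)$ with gating function $g(z) = \tfrac12\bigl(1 + \tanh\bigl(\sqrt{2/\pi}\,(z + 0.044715\,z^3)\bigr)\bigr)$ (equivalently $g(z)=\tfrac12(1+\operatorname{erf}(z/\sqrt2))$ for the exact definition used in the main text), the object to control is the residual
\[
\operatorname{GeLU}(z) - z = z\,(g(z)-1).
\]
The entire claim reduces to showing that this residual tends to zero when $z=N+x$ and $N\gg 1$.

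First I would bound the gating tail $1-g(z)$. Using $1-\tanh(u) = 2/(e^{2u}+1) \le 2e^{-2u}$ together with $u=\sqrt{2/\pi}\,(z+0.044715\,z^3)\to +\infty$ as $z\to+\infty$, one gets $1-g(z) \le e^{-2u}$, which decays faster than any polynomial in $z$ (the erf form gives an even stronger Gaussian tail $1-g(z)=O(e^{-z^2/2}/z)$). Multiplying by $z$ then shows that $z\,(g(z)-1)\to 0$, since a polynomial factor times a super-exponentially small factor vanishes. Substituting $z=N+x$ for fixed $x$ and letting $N\gg1$ yields
\[
\operatorname{GeLU}(N+x) - N = x + \bigl(\operatorname{GeLU}(N+x)-(N+x)\bigr) = x + o(1),
\]
which is exactly the stated approximation. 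If a quantitative version is desired, the same chain gives the explicit bound $|\operatorname{GeLU}(N+x)-N-x| \le (N+x)\,e^{-2\sqrt{2/\pi}\,((N+x)+0.044715\,(N+x)^3)}$ for $N+x>0$, whose right-hand side is driven to zero by increasing $N$.

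There is essentially no hard step here; the only mild subtlety is to ensure $x$ is treated as fixed (or at least $o(N)$) so that $N+x$ remains large and positive, keeping the argument of the gating function in the saturated regime. In the construction this is automatic: the scratch entries $x$ transmitted through the feed-forward block are bounded while $N$ is a free parameter that may be chosen as large as needed, so the additive-bypass trick used by the $\tadd$ operator passes its input through the nonlinearity with error made arbitrarily small.
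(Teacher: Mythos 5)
Your proof is correct and rests on the same idea as the paper's: for large arguments the $\tanh$ gating factor saturates to $1$, so $\operatorname{GeLU}(N+x)\approx N+x$ and subtracting $N$ leaves $x$. The only difference is that you make the saturation quantitative via the tail bound $1-\tanh(u)\le 2e^{-2u}$, which is a slightly more rigorous rendering of the paper's one-line approximation $\tanh(\cdot)\approx 1$ (and incidentally avoids the paper's typographical slip of writing $N$ where $N+x$ is meant in the expansion).
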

\begin{proof}
\begin{align}
     \operatorname{GeLU}(N+x) - N &= \frac{N}{2}\left(1+\tanh \left(\sqrt{\frac{2}{\pi}}\left(N+0.044715 N^3\right)\right)\right) - N \\
     &\approx \frac{N}{2}(1+1) - N \\
     &= x 
\end{align}
\end{proof}

For all verions of the \texttt{RAW} operator, it is additionally necessary to bypass the LayerNorm operation.
The following formula will be helpful for this:
\begin{lem}\label{lem:bypassln} Let $N$ be a large number and $\lambda$ the LayerNorm function. Then the following approximation holds:
\begin{equation}
  \sqrt{\frac{2}{L}}N \lambda([\vx, N, -N - \sum \vx, \mathbf{0}]) \approx [\vx, 2N,-2N - \sum \vx,\mathbf{0}]  \quad N \gg 1
\end{equation}
\end{lem}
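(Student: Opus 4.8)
The plan is to treat the argument of the LayerNorm as a single explicit vector $\vz = [\vx, N, -N-\sum\vx, \mathbf{0}] \in \R^{L}$ and to evaluate directly the two statistics that $\lambda$ depends on — the entrywise mean $\expect[\vz]$ and variance $\var[\vz]$ — to leading order in the large constant $N$. The point of the two ``anchor'' coordinates $N$ and $-N-\sum\vx$ is precisely to control these statistics, so once we pin them down the rescaled LayerNorm becomes transparent. Throughout I would treat $\vx$ (and hence $\sum \vx$ and $\|\vx\|$) as fixed and $N \to \infty$.

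First I would compute the mean. By construction the entries sum to $\sum\vx + N + (-N - \sum\vx) + 0 = 0$, so $\expect[\vz] = 0$ \emph{exactly}, independent of $N$. This is the key design choice: the second anchor is chosen to cancel both the bulk term $N$ and the contribution $\sum\vx$ of the informative block, so that the centering step of LayerNorm does not distort $\vx$ at all. Next I would compute the variance. Since the mean is zero, $\var[\vz] = \tfrac1L\sum_i z_i^2 = \tfrac1L\big(\|\vx\|^2 + N^2 + (N+\sum\vx)^2\big)$. Expanding $(N+\sum\vx)^2 = N^2 + 2N\sum\vx + (\sum\vx)^2$ shows the two anchors dominate the sum of squares: $\sum_i z_i^2 = 2N^2\big(1 + \mathcal{O}(1/N)\big)$, whence $\sqrt{\var[\vz]} = \sqrt{\tfrac{2}{L}}\,N\,\big(1 + \mathcal{O}(1/N)\big)$.

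Combining these, since the mean vanishes we have $\lambda(\vz) = \vz / \sqrt{\var[\vz]}$, and multiplying by the prefactor $\sqrt{2/L}\,N$ cancels the leading normalization constant, returning the vector to itself up to a multiplicative $(1+\mathcal{O}(1/N))$ error. In particular the informative block passes through as $\vx$ (up to $\mathcal{O}(1/N)$) while the anchor coordinates are reproduced at their $\pm N$ scale — which is exactly the sense in which the LayerNorm has been \emph{bypassed} and is what the downstream constructions need. The cleanest execution is to factor $2N^2$ out of the square root and Taylor-expand $\sqrt{1 + \mathcal{O}(1/N)}$, which makes the error bookkeeping mechanical once one assumes $\|\vx\| = o(N)$.

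The main obstacle I anticipate is not the mechanism but the constant in front of the anchor coordinates on the right-hand side: the $\vx$-block forces the effective scale to be $\sqrt{2/L}\,N$, and tracking how the $\pm N$ anchors are then reproduced (and reconciling this with the stated $2N$, $-2N-\sum\vx$ form) is the one step demanding care, together with verifying uniformity of the $\mathcal{O}(1/N)$ remainder over the relevant range of $\vx$. Everything else reduces to the two elementary moment computations above.
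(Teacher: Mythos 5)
Your proof is correct and is essentially the paper's own argument: the second anchor is chosen so the mean is exactly zero, the variance is $\tfrac{1}{L}\bigl(\|\vx\|^2+N^2+(N+\sum\vx)^2\bigr)=\tfrac{2N^2}{L}\bigl(1+\mathcal{O}(1/N)\bigr)$, and the prefactor $\sqrt{2/L}\,N$ cancels $\sqrt{\var[\vz]}$, so the whole vector passes through up to a $(1+\mathcal{O}(1/N))$ factor (you are in fact more careful than the paper, which silently drops the cross term $2N\sum\vx$). The discrepancy you flag in the last paragraph is real but it is the paper's, not yours: a uniform rescaling cannot fix $\vx$ while doubling the anchors, and the paper's displayed proof contains an arithmetic slip (it writes $N/\sqrt{\var[\vz]}\approx\sqrt{2L}$ where the correct value is $\sqrt{L/2}$, which is what manufactures the $2N$), so the right-hand side should read $[\vx, N, -N-\sum\vx, \mathbf{0}]$ exactly as your computation gives — a factor-of-two inconsistency in the anchor coordinates that is immaterial downstream, since only the $\vx$ block is used after the bypass.
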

\begin{proof}
\begin{align}
\expect[\vx] &= 0 \\ 
\var[\vx] &= \frac{1}{L} (N^2 + N^2  + x^2) \approx \frac{2N^2}{L} \\
\end{align}
Then, 
\begin{align*}
   \sqrt{\frac{2}{L}}N \lambda([\vx, N, -N -\sum x, \mathbf{0}]) &\approx   \sqrt{\frac{2}{L}}N[\sqrt{\frac{L}{2N^2}}\vx, \sqrt{2L}, -\sqrt{2L} -  \sqrt{\frac{L}{2N^2}}\sum\vx, \mathbf{0}] \\
    &= [
    \vx, 2N,-2N - \sum \vx,\mathbf{0}]
\end{align*}
\end{proof}
By adding a large number $N$ to two padding locations and sum the part of the hidden state that we are interested to pass through LayerNorm, we make $x$ to the output of LayerNorm pass through. This addition can be done in the transformer's feed-forward computation (with parameter $W^F$) prior to layer norm. This multiplication of $\sqrt{\frac{2}{L}}N$ can be done in first layer of MLP back, then linear layer can output/use $\vx$. For convenience, we will henceforth omit the LayerNorm operation when it is not needed.

We may make each of these operations as precise as desired (or allowed by system precision). With them defined, we are ready to specify the final components of the \texttt{RAW} operator.

\subsubsection{Parameterizing RAW} \label{app:rawlemma1}
We want to show a layer of Transformer defined in above, hence parameterized by $\theta=\{W_{\textrm{f}}, W_1, W_2, (W^Q, W^K, W^v)_{m}\}$, can well-approximate the RAW operator defined in \cref{eq:raw}. We will provide step by step constructions and define the parameters in $\theta$.
Begin by recalling the transformer layer definition:
\begin{align}
    \boldsymbol{\alpha} &= \mathrm{softmax}\Big( (W_j^Q \vh_i)^{\top} (W_j^K H_{:i}) \Big) \\
    \vb_j &= \boldsymbol{\alpha} (W_j^V H_{:i})  \\
\label{eq:att}
    \va_i &= W^F [\vb_1, \ldots, \vb_m] \\
    \vh^{(l+1)} &= \feedforward(\va; W_1, W_2) \\
    &= W_1 \sigma ( W_2 \lambda (\va + \vh^{(l)} )) + \va + \vh^{(l)} ~ .
\end{align}

\paragraph{Attention Output}

We will only use $m=2$ attention heads for this construction. We show in \cref{eq:posemb} that we can control attentions to uniformly attend with a pattern by setting \emph{key} and \emph{query} matrices. Assume that the first head parameters $W_1^Q, W_1^K$  have been set in the described way to obtain the pattern function $\mathbf{K}$. 

Now we will set remaining attention parameters $W_1^V, W_2^Q, W_2^K, W_2^V$ and show hat we can make the $\va_i + \vh^{(l)}_i$ term in \cref{eq:tf-mlp} to contain the corresponding term in \cref{eq:raw}, in some unused indices $\ttt$ such that:
\begin{align}\label{eq:attoutput}
(\va^{(l)}_{i} + \vh^{(l)}_i)_{\ttt} &= \left(\frac{W_{a}}{|K(i)|} \sum_{k \in K(i)} h^{l}_{k}[\tr]\right) \\
(\va^{(l)}_{i} + \vh^{(l)}_i)_{t' \notin \ttt} &= (\vh^{(l)}_i)_{t' \notin \ttt}
\end{align}

Then the term on the RAW operator can be obtained by the first head's output. In order to achieve that, we will set $W_a$ as a part of actual attention value network such that $W_1^V$ is sparse matrix 0 everywhere expect:
\begin{equation}
({W_1^V})_{\ttt[m], \tr[n]} = (W_a)_{m,n}  \quad m \in 1,...,|\ttt|, n \in 1, ..., |\tr| 
\end{equation}

Now our first heads stores the right term in \cref{eq:attoutput} in the indicies $t$. However, when we add the residual term $\vh_i^{(l)}$, this will change. To remove the residual term, we will use another head to output $\vh_i^{(l)}$, by setting $W_2^Q, W_2^K$ such that $K(i)=i$, and $W_2^V$ (similar to \cref{app:kiex}):
\begin{equation}
({W_2^V})_{\ttt[m], \tr[n]} = -1 \quad m \in 1,...,|\ttt|, n \in 1, ..., |\tr| 
\end{equation}
Then, $W^{\textrm{f}} \in \mathbb{R}^{H \times 2H}$ is zero otherwise:
\begin{align}
({W^{\textrm{f}}})_{\ttt[m], \ttt[m]} &= 1  \quad m \in 1,...,|\ttt| \\
({W^{\textrm{f}}})_{\ttt[m], \ttt[m] + H} &= -1  \quad m \in 1,...,|\ttt| \\
\end{align}
We already defined $(W^Q, W^K, W^V)_{1, 2} \text{ and } W^{\textrm{f}}$ and obtained the first term in the \cref{eq:raw} in $(\va_{i} + \vh^{(l)}_i)_{t' \in \ttt}$.
\paragraph{Arithmetic term}
Now we want to calculate the term inside the parenthesis \cref{eq:raw}. We will calculate it through the MLP layer and store in $\vm_i$ and substract the first term. Let's denote the input to the MLP as $\vx_i=(\va_{i} + \vh^{(l)}_i)$, the output of the first layer $\vu_i$, the output of the non-linearity as $\va_i$, and the final output as $\vm_i$. The entries of $\vm_i$ will be:
\begin{align}\label{eq:appmi0}
    (\vm_i)_{t' \in \tw} &= W_o\left(\left(\frac{W_{a}}{|K(i)|}\sum_{k \in K(i)} \vh^{(l)}_{k}[\tr]\right) \ostar W \vh_i^{(l)}[\ts]\right) - \vx_i[\tw]  \\ 
    (\vm_i)_{t' \in \ttt} &=-\vx_{i}[\ttt] \\
    (\vm_i)_{t' \notin (t \cup \tw)} &= 0
\end{align}

We will define the MLP layer to operate the attention term calculated above with a part of the current hidden state by defining $W_1$ and $W_2$. Let's assume we bypass the $\operatorname{LayerNorm}$ by using \cref{lem:bypassln}. 

Let's show this seperately for $+$ and $\odot$ operators.

\paragraph{$\mathbf{RAW(+, .)}$}
If the operator, $\ostar=+$, first layer of the MLP will calculate the second term in \cref{eq:raw} and overwrite the space where the attention output term \cref{eq:attoutput} is written, and add a large positive bias term $N$ to by pass $\operatorname{GeLU}$ as explained in \cref{lem:geluformul}. We will use an available space $\hat{\ttt}$ in the $x_i$ same size as $\ttt$.
\begin{align}
      (\vu_i)_{t' \in \hat{\ttt}} &=  W \vh_i^{l-1}[\ts] + \vx_i[\ttt] + N \\
      (\vu_i)_{t' \in \ttt} &= - \vx_i[\ttt] + N \\
      (\vu_i)_{t' \in \tw} &= - \vx_i[\tw] + N \\
      (\vu_i)_{t' \notin (\ttt \cup \hat{\ttt} \cup \tw)} &= -N 
\end{align}
This can be done by setting $W_1$ (weight term of the first layer of the MLP) to zero except the below indices:
\begin{align}
({W_1})_{{\hat{\ttt}[m],\ts[n]}} &= (W)_{m,n} \quad m \in 1,...,|\hat{\ttt}|, n \in 1,\dots , |\ts| \\
({W_1})_{{\hat{\ttt}[m],\ttt[n]}} &= +1 \quad m \in 1,\dots ,|\hat{\ttt}|, n \in 1,\dots , |\ttt| \\   
({W_1})_{{\ttt[m],\ttt[m]}} &= -1 \quad m \in 1,\dots , |\ttt|  \\  
({W_1})_{{\tw[m],\tw[m]}} &= -1 \quad m \in 1,\dots , |\tw|  \\\\  
\end{align}
and the bias vector $\vb_1$ to
\begin{align}
{(\vb_1)}_{t' \in \ttt} &= N \\
{(\vb_1)}_{t' \in \tw} &= N \\
{(\vb_1)}_{t' \in \hat{\ttt}} &= N \\    
{(\vb_1)}_{t' \notin {\ttt \cup \tw \cup \hat{\ttt}}} &= -N \\ 
\end{align}
Note the second term is added to make unused indices ${\ttt \cup \tw \cup \hat{\ttt}}$ become zero after the $\operatorname{gelu}$, which outputs zero for large negative values.
Since we added a large positive term, we make sure $\operatorname{gelu}$ behaved like a linear layer. Thus we have, 
\begin{align}
      (\vv_i)_{t' \in \hat{\ttt}} &=  W \vh_i^{l}[\ts] + \vx_i[\ttt] + N \\
      (\vv_i)_{t' \in \ttt} &= -\vx_{i}[\ttt]  + N \\
      (\vv_i)_{t' \in \tw} &= -\vx_{i}[\tw]  + N \\
      (\vv_i)_{t' \notin \ttt \cup \tw \cup \hat{\ttt}} &= 0
\end{align}
Now, we need to set $W_2$, to simulate $W_o \in \mathbb{R}^{|w| \times |t|}$,
\begin{align}
    ({W_2})_{{\tw[m],\hat{\ttt}[n]}} &= (W_{o})_{m,n} \quad m \in 1,...,|\tw|, n \in 1, ..., |\ttt|  \\
    ({W_2})_{{\ttt[m],\ttt[m]}} &= +1 \quad  m \in 1,\dots , |\ttt|\\   
    ({W_2})_{{\tw[m],\tw[m]}} &= +1  \quad  m \in 1,\dots , |\tw|\\   
\end{align}
\begin{align}
{(\vb_2)}_{\tw[m]} &= -N\sum_j (W_{o})_{m,j} - N \quad m \in 1,\dots ,|\tw|\\
{(\vb_2)}_{\ttt[i]} &= -N \\ 
{(\vb_2)}_{t' \notin \ttt} &= 0 \\    
\end{align}
Therefore, $\vm_i[w]=W_o\vx_i[\ttt] + W_0W \vh_i^{l}[\ts] - \vx_i[\tw]$ equals to what we promised in \cref{eq:appmi0} for $+$ case. If we sum this with the residual $\vx_i$ term back \cref{eq:attoutput}, so the output of this layer can be written as:
\begin{align}\label{eq:appmi3}
    (\vh_i)^{(l+1)}_{t' \in \tw} &= W_o\left(\left(\frac{W_{a}}{|K(i)|}\sum_{k \in K(i)} \vh^{(l)}_{k}[\tr]\right) + W \vh_i^{(l)}[\ts]\right)  \\
    (\vh_i)^{(l+1)}_{t' \notin \tw} &=  (\vh_i^{l})_{t' \notin \tw}
\end{align}

\paragraph{$\mathbf{RAW(\odot, .)}$} If the operator, $\ostar=\odot$, we need to use three extra hidden units the same size as $|\ttt|$, let's name the extra indices as  $\ttt_a$, $\ttt_b$, $\ttt_c$, and output $w$ space. The $(\vu_i)$ will get below entries to be able to use [], where $N$ is a large number: 
\begin{align}
    (\vu_i)_{t' \in \ttt_a} &= (W \vh_i^{l}[\ts] + \vx_{i}[\ttt]) / N \\
    (\vu_i)_{t' \in \ttt_b} &= \vx_{i}[\ttt] / N \\
    (\vu_i)_{t' \in \ttt_c} &=  W \vh_i^{l}[\ts] / N \\
    (\vu_i)_{t' \in \ttt} &= - \vx_{i}[t] + N \\
    (\vu_i)_{t' \in \tw} &= - \vx_{i}[\tw] + N \\
    (\vu_i)_{t' \notin (\ttt \cup \ttt_a \cup \ttt_b \cup \ttt_c \cup \tw)} &= -N \\
\end{align}

All of this operations are linear, can be done $W_1$ zero except the below entries:
\begin{align}
(W_1)_{\ttt_a[m], \ts[n]} &= (W)_{m,n} / N \quad m \in 1,...,|\ttt_a|, n \in 1, ..., |\ts|\\ 
(W_1)_{\ttt_a[m], \ttt[n]} &= 1 / N \quad m \in 1,...,|\ttt_a|, n \in 1, ..., |\ttt|\\    
(W_1)_{\ttt_b[m], \ttt[m]} &= 1 / N \quad m \in 1,...,|\ttt_b|, n \in 1, ..., |\ttt|\\
(W_1)_{\ttt_c[m], \ts[m]} &= 1 / N \quad m \in 1,...,|\ttt_c|, n \in 1, ..., |\ts| \\
(W_1)_{\tw[m], \tw[m]} &= -1   \quad m \in 1,...,|\tw| \\
(W_1)_{\ttt[i], \ttt[m]} &= -1  \quad m \in 1,...,|\ttt|\\
\end{align}
and $\vb_1$ to:
\begin{align}
{(\vb_1)}_{t' \in (\ttt \cup \ttt_a \cup \ttt_b \cup \ttt_c)} &= 0 \\
{(\vb_1)}_{t' \in (\ttt  \cup \tw)} &= N \\
{(\vb_1)}_{t' \notin  (\ttt \cup \ttt_a \cup \ttt_b \tt\cup \ttt_c \tt\cup w}) &= -N \\    
\end{align}
The resulting $\vv$ with the approximations become:
\begin{align}\label{eq:appgeluapplied}
    (\vv_i)_{t' \in \ttt_a} &= \operatorname{gelu}((W \vh_i^{l}[\ts] +\vx_{i}[t]) / N \\
    (\vv_i)_{t' \in \ttt_b} &= \operatorname{gelu}(\vx_{i}[t] / N) \\
    (\vv_i)_{t' \in \ttt_c} &=  \operatorname{gelu}(W \vh_i^{l}[s] / N) \\
    (\vv_i)_{t' \in \ttt} &= \vx_{i}[\ttt] + N \\
    (\vv_i)_{t' \in \tw} &= \vx_{i}[\tw] + N \\
    (\vv_i)_{t' \notin (\ttt \cup \ttt_a \cup \ttt_b \cup \ttt_c \cup \tw)} &= 0 \\
\end{align}

Now, we can use the GeLU trick in \cref{lem:geluformul}, by setting $W_2$
\begin{align}
(W_2)_{\tw[m],\ttt_a[n]} &= ({W_o})_{m,n}N^2\sqrt{\frac{\pi}{2}} \quad m \in 1, \dots, |\tw|, n \in 1,\dots, |\ttt_a| \\    
(W_2)_{\tw[m],\ttt_b[n]} &= -({W_o})_{m,n}N^2\sqrt{\frac{\pi}{2}} \quad m \in 1, \dots, |\tw|, n \in 1,\dots, |\ttt_b|\\
(W_2)_{\tw[m],\ttt_c[n]} &= -({W_o})_{m,n}N^2\sqrt{\frac{\pi}{2}} \quad m \in 1, \dots, |\tw|, n \in 1,\dots, |\ttt_c|\\
(W_2)_{\tw[m],\tw[m]} &= 1 \quad m \in 1, \dots, |\tw|\\
(W_2)_{\ttt[m],\ttt[m]} &= 1 \quad m \in 1, \dots, |\ttt|  \\
\end{align}
We then set $b_2$:
\begin{align}
{(\vb_2)}_{t' \in (t  \cup \tw)} &= N \\
{(\vb_2)}_{t' \notin (t  \cup \tw)} &= 0 \\    
\end{align}
With this, $\vm_i[w]=W_o\vx_i[t] * W_0W\vh_i^{l-1}[s] - \vx_i[w]$, and 
\begin{align}\label{eq:appmi2}
    (\vh_i)^{(l+1)}_{t' \in \tw} &= W_o\left(\left(\frac{W_{a}}{|K(i)|}\sum_{k \in K(i)} \vh^{(l)}_{k}[\tr]\right) \odot W \vh_i^{(l)}[\ts]\right)  \\
    (\vh_i)^{(l+1)}_{t' \notin \tw} &=  (\vh_i^{l})_{t' \notin \tw}
\end{align}
We have used $4|t|$ space for internal computation of this operation, and finally used $|\tw|$ space to write the final result. We show RAW operator is implementable by setting the parameters of a Transformer.

\subsection{Parallelizing the \texttt{RAW} operator}\label{app:parallelize}
\begin{lem}\label{lem:parallel}
With the conditions that $K$ is constant, the operators are independent (i.e $(r_i \cup s_i \cup w_i) \cap w_{j \neq i}=\emptyset$), and there is $\sum_k(4|\ttt_k| + |\tw_k|)$ available space in the hidden state, then a Transformer layer can apply $k$ such $\operatorname{RAW}$ operation in parallel by setting different regions of $W_1, W_2, W_f \text{ and } (W^V)_k$ matrices. 
\end{lem}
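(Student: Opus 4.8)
The plan is to prove \cref{lem:parallel} by a direct-sum (superposition) argument that lifts the single-operator construction of \cref{app:lemmaraw} to $k$ simultaneous operators. The central observation is that in that construction every parameter matrix (the per-head value matrices $W^V$, the head-combining matrix $W^F$, and the feed-forward matrices $W_1, W_2$ with their biases) is \emph{sparse}: its nonzero entries are indexed only by the read, source, write, and scratch index sets of the single operator. Consequently, if the $k$ operators use pairwise disjoint scratch and write regions, the parameters for the parallel layer can be taken to be the block-superposition (the entrywise sum) of the $k$ individual parameter settings, with no cross terms created. The hypothesis that $K$ is constant is exactly what lets all operators share one attention computation: the single construction of \cref{app:rawlemma1} used two heads -- one realizing the pattern $K$ and one realizing the identity pattern $K(i)=i$ used to cancel the residual at the scratch indices -- and with a common $K$ these same two heads serve every operator at once.

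First I would allocate the scratch space. Using the assumed $\sum_k(4|\ttt_k| + |\tw_k|)$ free coordinates, I assign to each operator $k$ pairwise disjoint index blocks: its write target $\tw_k$ and its internal scratch $\ttt_k$ (together with the auxiliary triple $\ttt_{a,k}, \ttt_{b,k}, \ttt_{c,k}$ needed for the multiplicative form), all disjoint from one another and from every read and source set. Next I would assemble the attention value matrices: for the first (pattern-$K$) head set $(W^V)_{\ttt_k[m],\tr_k[n]} = (W_{a,k})_{m,n}$ for each $k$, and for the second (identity) head place each operator's residual-cancellation block at its own $\ttt_k$. Because the $\ttt_k$ are disjoint these blocks occupy disjoint rows and add without collision, and overlap among the read sets $\tr_k$ is harmless, since it only means several operators read a shared input coordinate. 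Finally I would set $W_1, W_2$ and their biases to the corresponding block-sum of the per-operator feed-forward constructions, one block per operator, so that the combined feed-forward layer performs all $k$ elementwise $\ostar$-operations and reductions and writes each result into its own $\tw_k$.

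The step I expect to be the main obstacle -- and the only place where disjointness is genuinely needed rather than merely convenient -- is verifying that sharing a single nonlinearity across $k$ computations introduces no interference. Since GeLU acts coordinatewise and, by the block structure of $W_1$, the pre-activation at operator $k$'s scratch coordinates depends only on inputs relevant to operator $k$, the nonlinearity applied to the concatenated pre-activation equals the concatenation of the per-operator nonlinearities; hence each operator's GeLU-based multiplication (\cref{lem:geluformul}) proceeds exactly as in the single case. The remaining point is semantic rather than computational: the independence condition $(r_i \cup s_i \cup w_i)\cap w_{j\neq i} = \emptyset$ guarantees that no operator's write clobbers a coordinate another operator reads, so all operators read the common layer input $\vh^{(l)}$ and deposit their outputs into disjoint targets -- precisely the parallel semantics claimed. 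Combining the three assembled parameter blocks then yields a single transformer layer computing all $k$ \texttt{RAW} operators, establishing the lemma.
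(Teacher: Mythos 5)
Your proposal is correct and follows essentially the same route as the paper's (much terser) proof: exploit the sparsity of the single-\texttt{RAW} parameter matrices from \cref{app:rawlemma1} and superpose the $k$ constructions in disjoint index regions, with the constant-$K$ assumption letting all operators share the two attention heads and the independence condition preventing read/write collisions. Your write-up simply makes explicit the details (coordinatewise GeLU non-interference, harmlessness of shared read sets) that the paper leaves implicit in its one-line argument.
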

\begin{proof}
From the construction above, it is straightforward to modify the definition of the \texttt{RAW} operator to perform $k$ operations as all the indices of matrices that we use in \cref{app:rawlemma1} do not overlap with the given conditions in the lemma. 
\end{proof}

This makes it possible to construct a Transformer layer not only to implement vector-vector dot products, but general matrix-matrix products, as required by $\textbf{\tmul}$. With this, we show that we can implement $\textbf{\tmul}$ by using single layer of a Transformer.

\subsection{LayerNorm for Division}\label{app:sdiv}
Let say we have the input $[c,\vy, \mathbf{0}]^{\top}$ calculated before the attention output in \cref{eq:attoutput}, and we want to divide $\vy$ to $c$. This trick is very similar to the on in \cref{lem:bypassln}. We can use the following formula: 
\begin{lem}\label{lem:lndiv} using LayerNorm for division. Let $N, M$ to be large numbers, $\lambda$ LayerNorm function, the following approximation holds:
\begin{equation}
  \sqrt{\frac{2}{L}}{MN} \lambda([Nc, \frac{\vy}{M}, -Nc - \sum \frac{\vy}{M}, \mathbf{0}]) \approx [MN, \frac{\vy}{c},-MN -\frac{\vy}{c} , \mathbf{0}] 
\end{equation}
\end{lem}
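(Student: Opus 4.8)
The plan is to follow the template of \cref{lem:bypassln}, where the two decisive structural facts are (i) the auxiliary padding entry $-Nc - \sum \vy/M$ is engineered precisely to cancel the sum of all other entries, forcing the LayerNorm mean to vanish, and (ii) the two entries of magnitude $\approx Nc$ dominate every other contribution to the variance, so that LayerNorm reduces to dividing by (a fixed multiple of) $|c|$. First I would check that the mean is \emph{exactly} zero: the entries of the input sum to $Nc + \sum_j (y_j/M) + \big(-Nc - \sum_j (y_j/M)\big) + 0 = 0$, hence $\expect[\,\cdot\,]=0$ and $\lambda$ collapses to division by the standard deviation.

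Second, I would compute the variance. Since the mean is zero, $\var = \tfrac1L\sum_i z_i^2$, and writing $S = \sum_j y_j/M$ the two large entries contribute $(Nc)^2 + (Nc+S)^2 = 2N^2c^2 + 2NcS + S^2$, while the middle block adds $\sum_j (y_j/M)^2$. The leading term is $2N^2c^2/L$; the remaining pieces $2NcS/L$, $S^2/L$ and $\sum_j(y_j/M)^2/L$ are lower order, with leading relative size $O\!\big(\tfrac{\sum_j y_j}{MNc}\big)$. Thus $\sqrt{\var} = \sqrt{2/L}\,N|c|\,(1+o(1))$ as $N,M\to\infty$, where taking $M$ large suppresses the $\vy/M$ terms and taking $N$ large makes $2N^2c^2$ dominate the cross term $2NcS$.

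Third, I would substitute and simplify componentwise. We get $\lambda(z) \approx z/(\sqrt{2/L}\,N|c|)$, and multiplying by the stated prefactor $\sqrt{2/L}\,MN$ gives $\tfrac{M}{|c|}\,z$. Assuming $c>0$ (as is guaranteed here, since the \tdiv primitive divides by the \emph{absolute value} $|\vh_{i'}|$), the first entry becomes $MN$, the block $\vy/M$ becomes $\tfrac{M}{c}\cdot\tfrac{\vy}{M} = \vy/c$, and the padding entry becomes $\tfrac{M}{c}\big(-Nc - S\big) = -MN - \sum_j y_j/c$, which matches the claimed output (the summation on the third entry is implicit in the same notation used in \cref{lem:bypassln}).

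The main obstacle is making the $o(1)$ precise rather than heuristic: I must produce an explicit bound showing the relative error in $\sqrt{\var}$, and hence in the final vector, is $O\!\big(\tfrac{\sum_j y_j}{MNc}\big)$ up to higher-order terms, so that it can be driven below any prescribed tolerance by choosing $N$ and $M$ large — consistent with the paper's convention that ``$\approx$'' means ``to arbitrary precision for sufficiently large $N,M$''. A secondary subtlety is the sign of $c$: the construction intrinsically yields $\vy/|c|$, which equals $\vy/c$ only for $c>0$, so I would state the $c>0$ assumption (equivalently, the $|c|$ convention inherited from \tdiv) explicitly to make the identification exact.
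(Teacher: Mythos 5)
Your proposal follows essentially the same route as the paper's proof: verify the mean is exactly zero by construction, show the variance is dominated by the two large entries so that $\var \approx 2N^2c^2/L$, and then scale by the prefactor $\sqrt{2/L}\,MN$ to read off each component. Your additional care about the explicit $O\bigl(\tfrac{\sum_j y_j}{MNc}\bigr)$ error term and the sign convention $c>0$ (inherited from \tdiv dividing by $|\vh_{i'}|$) tightens points the paper leaves implicit, but the underlying argument is identical.
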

\begin{proof}
\begin{align}
\expect[\vx] &= 0 \\ 
\var[\vx] &= \frac{1}{L} \left(N^2c^2 + \frac{1}{M}\sum \vy^2  + \left(Nc + \sum  \frac{\vy}{M}\right)^2\right) \approx \frac{2N^2c^2}{L} \\
\end{align}
Then, 
\begin{align*}
\sqrt{\frac{2}{L}}{MN} \lambda([Nc, \frac{\vy}{M}, -Nc - \sum \frac{\vy}{M}, \mathbf{0}]) &= \sqrt{\frac{2}{L}}{MN} [\sqrt{\frac{L}{2}}, \sqrt{\frac{L}{2}}\frac{\vy}{MNc}, -\sqrt{\frac{L}{2}} - \sqrt{\frac{L}{2}}\sum\frac{\vy}{MNc}, 0] \\
&=[MN, \underbrace{\frac{\vy}{c}}_{\textrm{wanted result}},-MN -\frac{\vy}{c} , \mathbf{0}] \quad 
\end{align*}
\end{proof}
To get the input to the format used in this Lemma, we can easily use $W_f$ to convert the head outputs. Then, after the layer norm, we can use $W_1$ to pull the $\frac{\vy}{c}$ back and write it to the attention output. By this way, we can approximate scalar division in one layer.

\paragraph{\cref{lem:lemma1}}
By \cref{lem:rawtopp,,app:lemmaraw,,app:lemmaraw,,lem:parallel,,lem:lndiv}; we constructed the operators in \cref{lem:lemma1} using single layer of a Transformer, thus proved \cref{lem:lemma1} \qedsymbol{}

\section{Details of Transformer Arhitecture and Training}\label{app:hyper}
We perform these experiments using the Jax framework on P100 GPUs. The major hyperparameters used in these experiments are presented in \cref{tab:ft_hyperparams}. The code repository used for reproducing these experiments will be open sourced at the time of publication. Most of the hyperparameters adapted from previous work \cite{Garg2022WhatCT} to be compatible, and we adapted the Transformer architecture details. We use Adam optimizer with cosine learning rate scheduler with warmup where number of warmup steps set to be 1/5 of total iterations. We use larned absolute position embeddings.

\begin{table*}[ht]
\small
\center
\begin{tabular}{lrr}
\toprule
\bf Parameter &\bf Search Range  \\
\midrule
Number of heads & 1, 2, \textbf{4}, 8 s\\
Number of layers & 1, 2, 12, \textbf{16} \\
Hidden size &  16, 32, 64, 256, \textbf{512}, 1024\\
Batch size & 64 \\
Maximum number of epochs & 500.000 \\
Initial Learning rate ($lr_i$) & \textbf{1e-4}, 2.5e-4 \\
Weight decay & \textbf{0}, 1e-5 \\
Bias initialization & \textbf{uniform scaling}, normal(1.0) \\ 
Weight initialization &\textbf{uniform scaling}, normal(1.0) \\ 
Position embedding initialization & \textbf{uniform scaling}, normal(1.0) \\
\bottomrule
\end{tabular}
\caption{Hyperparameters used in the ICL. The best parameter for each hyperparameter is highlighted.}
\label{tab:ft_hyperparams}
\end{table*}

In the phase shift plots in \cref{fig:phaseshift}, we keep the value in the x-axis constant and used the best setting over the parameters: \{number of layers, hidden size, number of heads and  learning rate\}.

\section{Details of Probe}\label{app:probe_details}
\begin{figure}[t]
    \begin{subfigure}[t]{\textwidth}
    \centering
    \includegraphics[width=\textwidth]{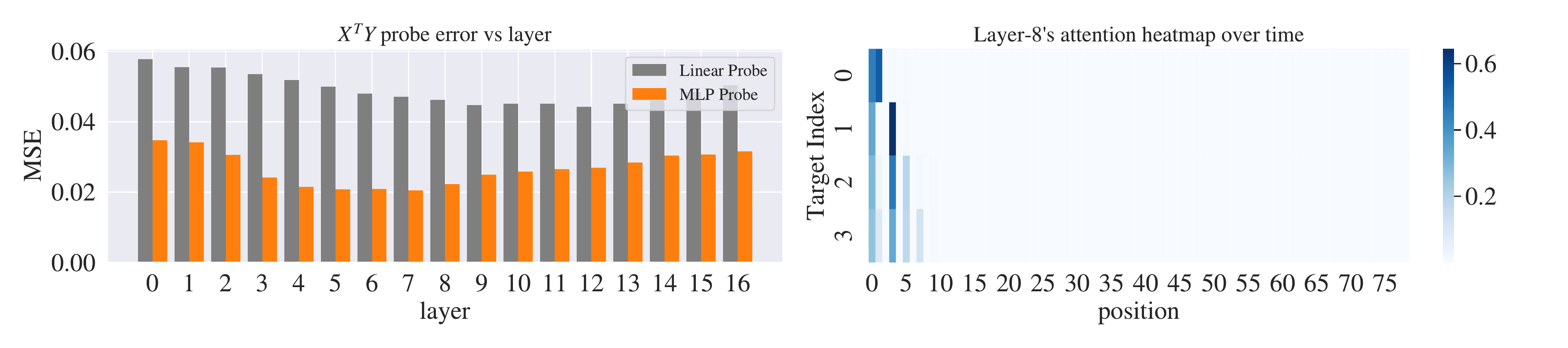}
    \label{fig:controlxtyprobe}
     \end{subfigure}
     \begin{subfigure}[t]{\textwidth}
    \centering
    \includegraphics[width=\textwidth]{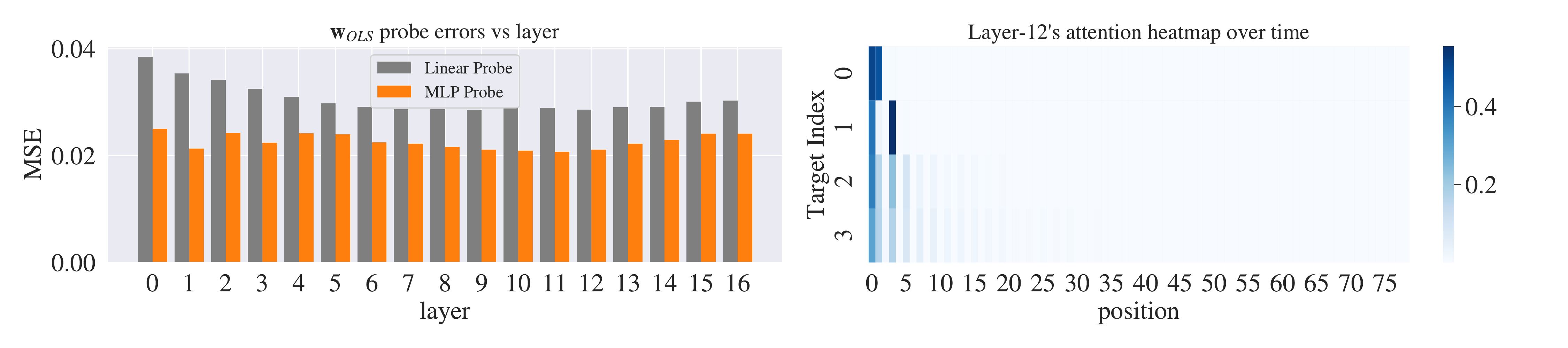}
    \label{fig:controlwprobe}
\end{subfigure}
     \caption{Detailed error values of the control probe displayed in \cref{fig:probe}.}
     \label{fig:controlprobe}
\end{figure}
We will use the terms \emph{probe model} and \emph{task model} to distinguish probe from ICL.
Our probe is defined as:
\begin{align}
    \boldsymbol{\alpha} &= \operatorname{softmax}(\vs_{v}^{}) \\
    \hat{\vv} &= \operatorname{FF_v}(\boldsymbol{\alpha}^{\top}W_v\vh^{})
\end{align}
The position scores $\vs_{v}^{} \in \mathbb{R}^{T}$ are learned parameters where $T$ is the max input sequence length ($T=80$ in our experiments). The softmax of position scores attention weights $ \boldsymbol{\alpha}^{}$ for each position and for each target variable. This enables us to learn input-independent, optimal target locations for each target (displayed on the right side of \cref{fig:probe}). We then average hidden states by using these attention weights. A linear projection, $W_v \in \mathbb{R}^{T \times H'}$, is applied before averaging. $\operatorname{FF}$ is either a linear layer or a 2-layer MLP (hidden size=512) with a $\operatorname{GeLU}$ activation function. For each layer, we train a different probe with different parameters using stochastic gradient descent. $H'$ equals to the 512. The probe is trained using an Adam optimizer with a learning rate of 0.001 (chosen from among $\{0.01, 0.001, 0.0001\}$ on validation data).

\paragraph{Control Experiments} In \cref{fig:probe}, dashed lines show probing results with a task model trained on a \emph{control task}, in which $w$ is always the all-ones $\mathbf{1}$. This problem structurally resembles our main experiment setup, but does not require in-context learning. During probing, we feed this model data generated by $w$ sampled form normal distribution as in the original task model. We observe that the control probe has a significantly higher error rate, showing that the probing accuracy obtained with actual task model is non-trivial. We present detailed error values of the control probe in \cref{fig:controlprobe}.

\section{Linearity of ICL}\label{app:linearity}
\begin{figure}
  \centering
  \includegraphics[height=0.2\textheight]{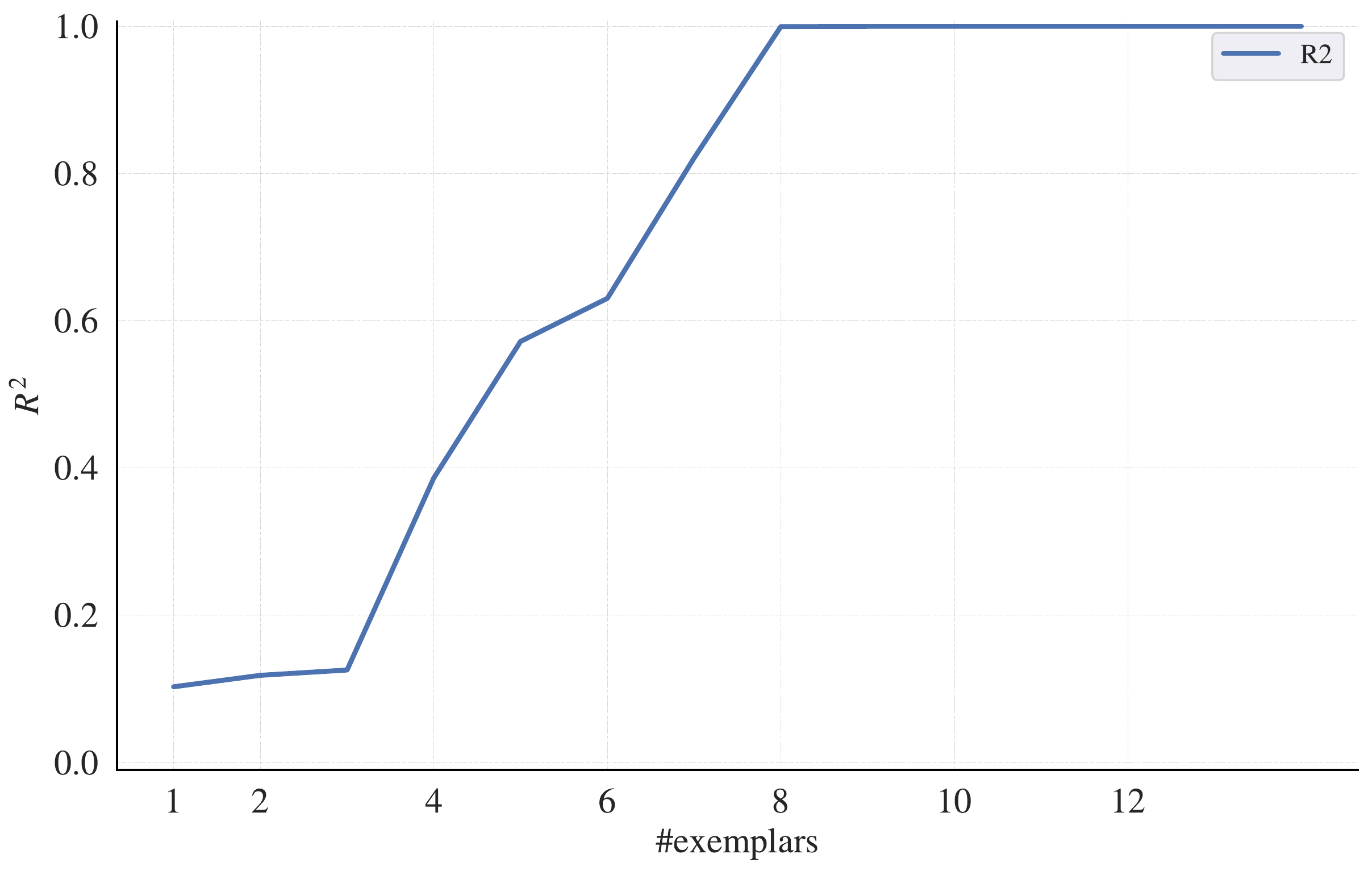}
  \caption{$R^2$ of linear weight estimation on $d=8$ problem}
  \label{fig:r2}
\end{figure}

In \cref{fig:implicitw}, we compare implicit linear weight of the ICL against the linear algorithms using ILWD measure. Note that this measure do not assume predictors to be linear: when the predictors are not linear, ILWD measures the difference between closest linear predictors (in \cref{eq:ilw} sense) to each algorithm. 

To gain more insight to ICL's algorithm, we can measure how linear ICL in different regimes of the linear problem (underdetermined, determined) by using $R^2$ (coefficient of determination) measure. So, instead of asking what's the best linear fit in \cref{eq:ilw}, we can ask how good is the linear fit, which is the $R^2$ of the estimator. Interestingly, even though our model matches min-norm least square solution in both metrics in \cref{fig:fit}, we show that ICL is becoming gradually linear in the under-determined regime \cref{fig:r2}. This is an important result, enables us to say the in-context learner's hypothesis class is not purely linear. %

\section{Multiplicative interactions with Other Non-linearities}
\begin{figure}
    \centering
    \begin{subfigure}[t]{0.48\textwidth}
        \includegraphics[width=\textwidth]{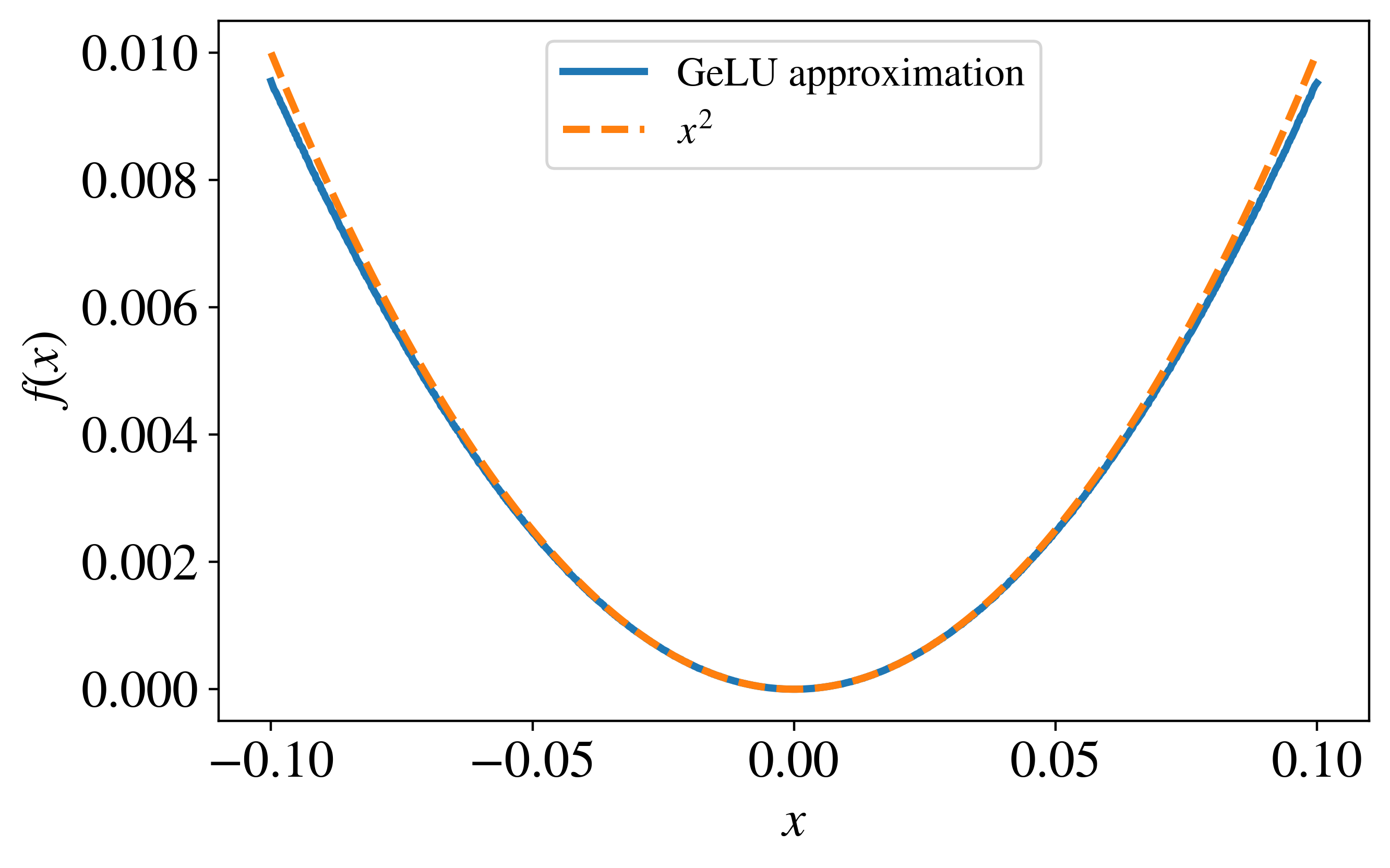}
        \caption{Approximating $x^2$ using $\operatorname{GeLU}$ , \cref{eq:gelutrickrepeat}.}
        \label{fig:geluapprox}
    \end{subfigure}\medskip
\mbox{}\hfill   
    \begin{subfigure}[t]{0.48\textwidth}
        \includegraphics[width=\textwidth]{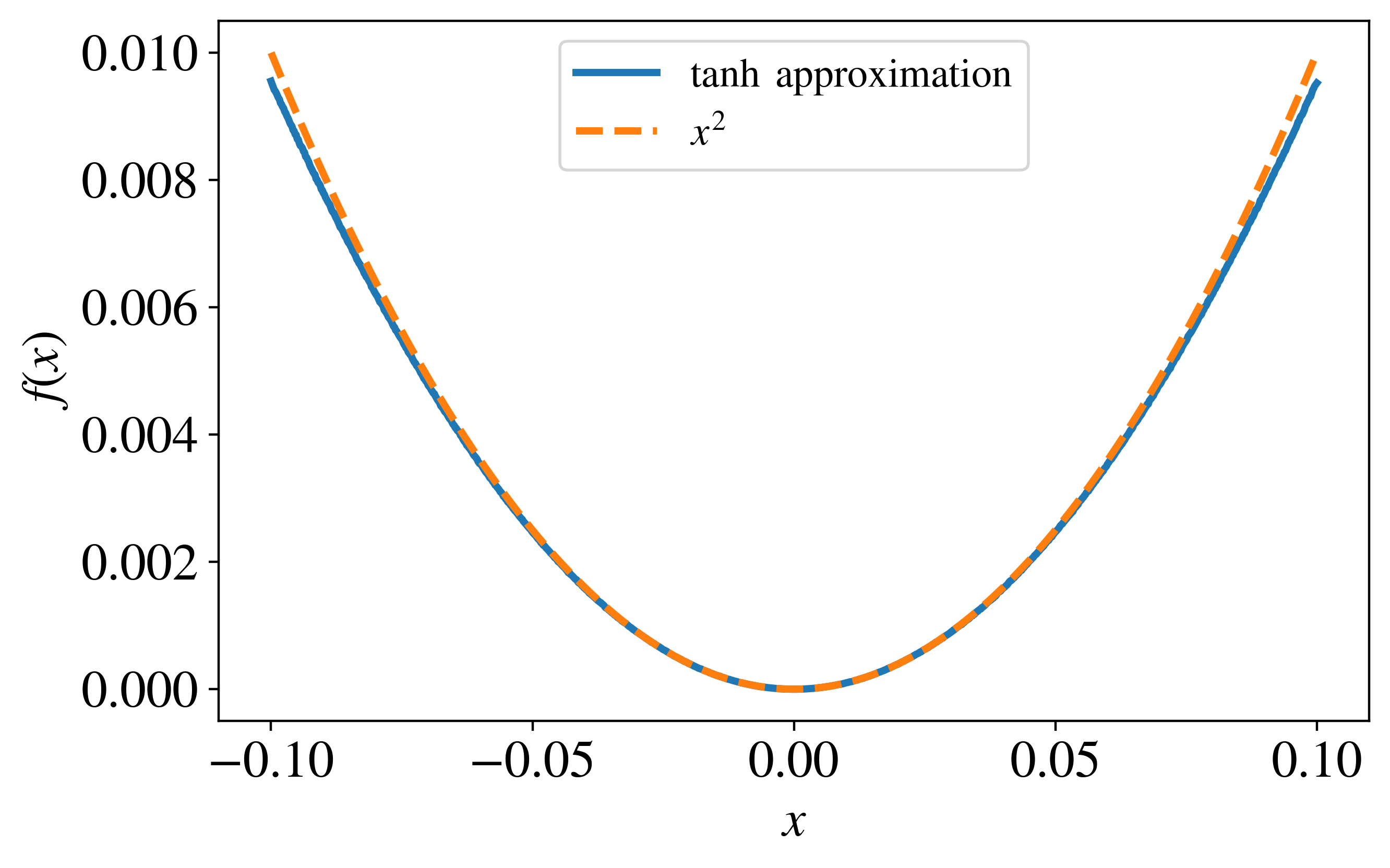}
         \caption{Approximating $x^2$ using $\tanh$, \cref{eq:derivapprox}, where $\delta=1e^{-3}$.}
        \label{fig:tanhapprox}
    \end{subfigure}
\mbox{}\hfill 
    \begin{subfigure}[t]{0.48\textwidth}
        \includegraphics[width=\textwidth]{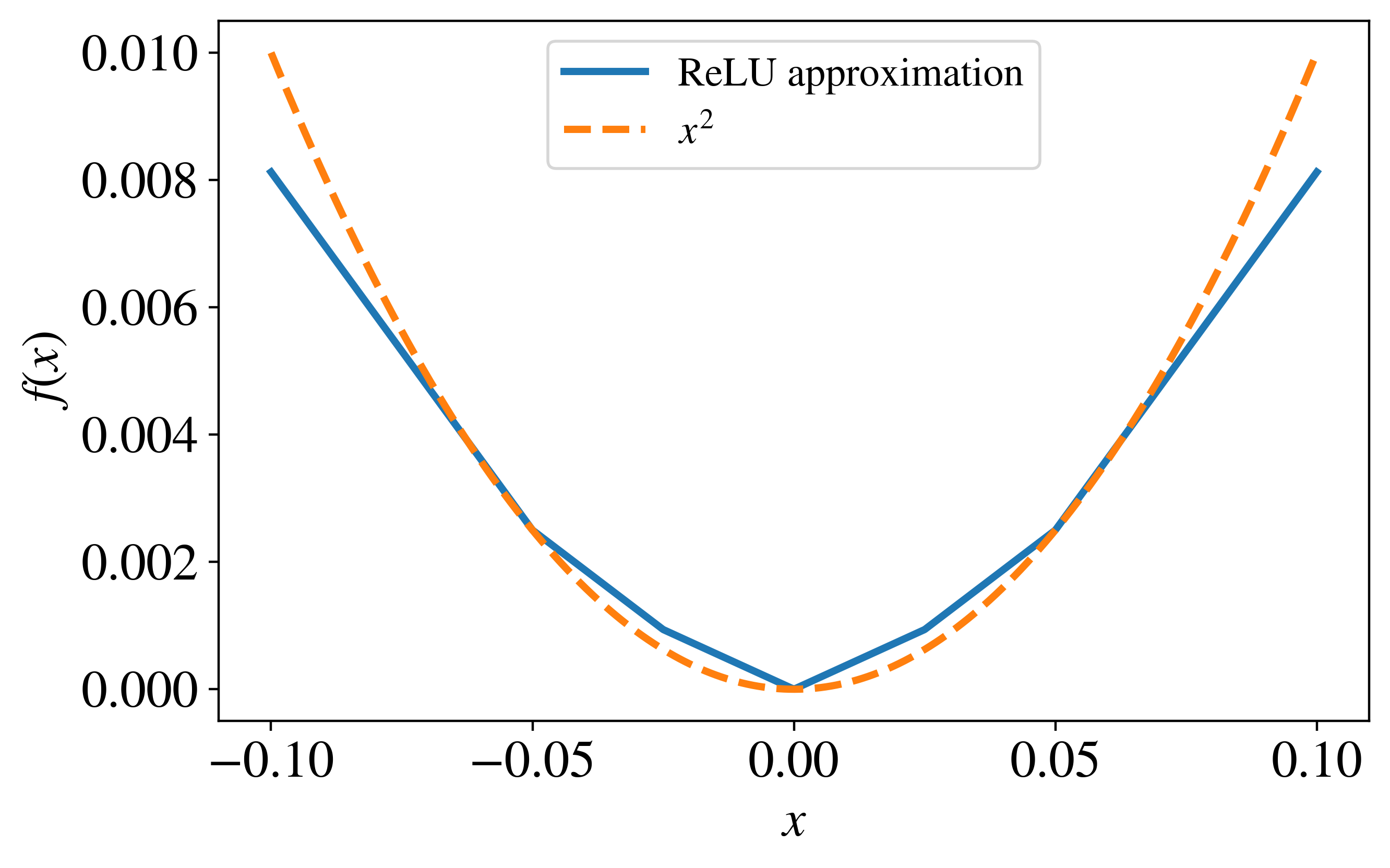}
         \caption{A piece-wise linear approximation to $x^2$ by using $\operatorname{ReLU}$,  \cref{eq:piecewise}.}
        \label{fig:reluapprox}
    \end{subfigure}
    \caption{Approximations of multiplication via various non-linearities.}
\end{figure}
We can show that for a real-valued and smooth non-linearity $f(x)$, we can apply the same trick in in the paper body. In particular, we can write Taylor expansion as:
\begin{equation}\label{eq:taylor}
    f(x) = \sum_{i=0}^{\infty} a_ix^{i} = a_0 + a_1x + a_2x^2 + \dots
\end{equation}
which converges for some sufficiently small neighborhood: $\mathcal{X} \in [-\eps, \eps]$. First, assume that the second order term $a_2$ dominates higher-order terms in this domain such that:
$$a_2x^2 \gg a_{i > 2}x^i \quad \text{where} x \in \mathcal{X}$$.

It's is easy to verify that the following is true:

\begin{equation}\label{eq:xy}
    \frac{1}{2a_2}\left(f(x+y)-f(x)-f(y) + a_0 \right) = xy + \mathcal{O}(x^3+y^3)
\end{equation}

So, given the expansion for GeLU in \cref{eq:gelutaylor}, we can use this generic formula to obtain the multiplication approximation:
\begin{equation}\label{eq:gelutrickrepeat}
 \sqrt{\frac{\pi}{2}}(GeLU(x+y) - GeLU(x) - GeLU(y)) \approx xy
\end{equation}

We plot this approximation against $x^2$ for $[0.1, -0.1]$ range in \cref{fig:geluapprox}.

In the case of $a_2$ is zero, we cannot get any second order term, and in the case of $a_2$ is negligible $\mathcal{O}(x^3+y^3)$ will dominate the \cref{eq:xy}, so we cannot obtain a good approximation of $xy$. In this case, we can resort to numerical derivatives and utilize the $a_3$ term:
\begin{equation}\label{eq:derivtaylor}
    f'(x) = a_1 + 2a_2x + 3a_3x^3 +  \dots
\end{equation}
If $a_3$ is not negligible, $a_3x^2 \ll a_{i > 3}x^i$ in the same domain, we can use numerical derivatives to get a multiplication term:
\begin{equation}
    \frac{1}{6a_3}\left(\frac{f(x+y+\delta) - f(x+y)}{\delta} - \frac{f(x+\delta) - f(x)}{\delta} - \frac{f(y+\delta) - f(y)}{\delta} + a_1 \right) = xy + \mathcal{O}(x^3+y^3)
\end{equation}
For example, $\tanh$ has no second order term in its Taylor expansion:
\begin{equation}\label{eq:derivtaylor2}
    \tanh x=x-\frac{x^3}{3}+\frac{2x^5}{15} + \dots
\end{equation}
Using above formula we can obtain the following expression:
\begin{equation}\label{eq:derivapprox}
\begin{split}
-\frac{1}{2}\left( \frac{\tanh(x+y+\delta) - \tanh(x+y)}{\delta} - \frac{\tanh(x+\delta) - \tanh(x)}{\delta} \right. \\ 
\left. -\frac{\tanh(y+\delta) - \tanh(y)}{\delta} + 1 \right)  \approx xy
\end{split}
\end{equation}

Similar to our construction in \cref{eq:appgeluapplied}, we can construct a Transformer layer that calculates these quantities (noting that $\delta$ is a small, input-independent scalar).

We plot this approximation against $x^2$ for $[0.1, -0.1]$ range in \cref{fig:tanhapprox}. Note that, if we use this approximation in our constructions we will need more hidden space as there are 6 different $\tanh$ term as opposed to $3\operatorname{GeLU}$ term in \cref{eq:appgeluapplied}.

\paragraph{Non-smooth non-linearities}
ReLU is another commonly used non-linearity that is not differentiable. With ReLU, we can only hope to get piece-wise linear approximations. For example, we can try to approximate $x^2$ with the following function:
\begin{equation}\label{eq:piecewise}
\begin{split}
0.0375*\operatorname{ReLU}(x) + 0.0375*\operatorname{ReLU}(-x) +  \operatorname{ReLU}(0.05*(x - 0.05)) + \\ \operatorname{ReLU}(-0.05*(x + 0.05)) + 
\operatorname{ReLU}(0.025*(x - 0.025)) + \operatorname{ReLU}(-0.025*(x + 0.025)) \approx x^2
\end{split}
\end{equation}
We plot this approximation against $x^2$ for $[0.1, -0.1]$ range in \cref{fig:reluapprox}.
\section{Empirical Scaling Analysis with Dimensionality}
\begin{figure}
    \label{fig:empreqs}
    \centering
    \begin{subfigure}[t]{0.48\textwidth}
        \includegraphics[width=\textwidth]{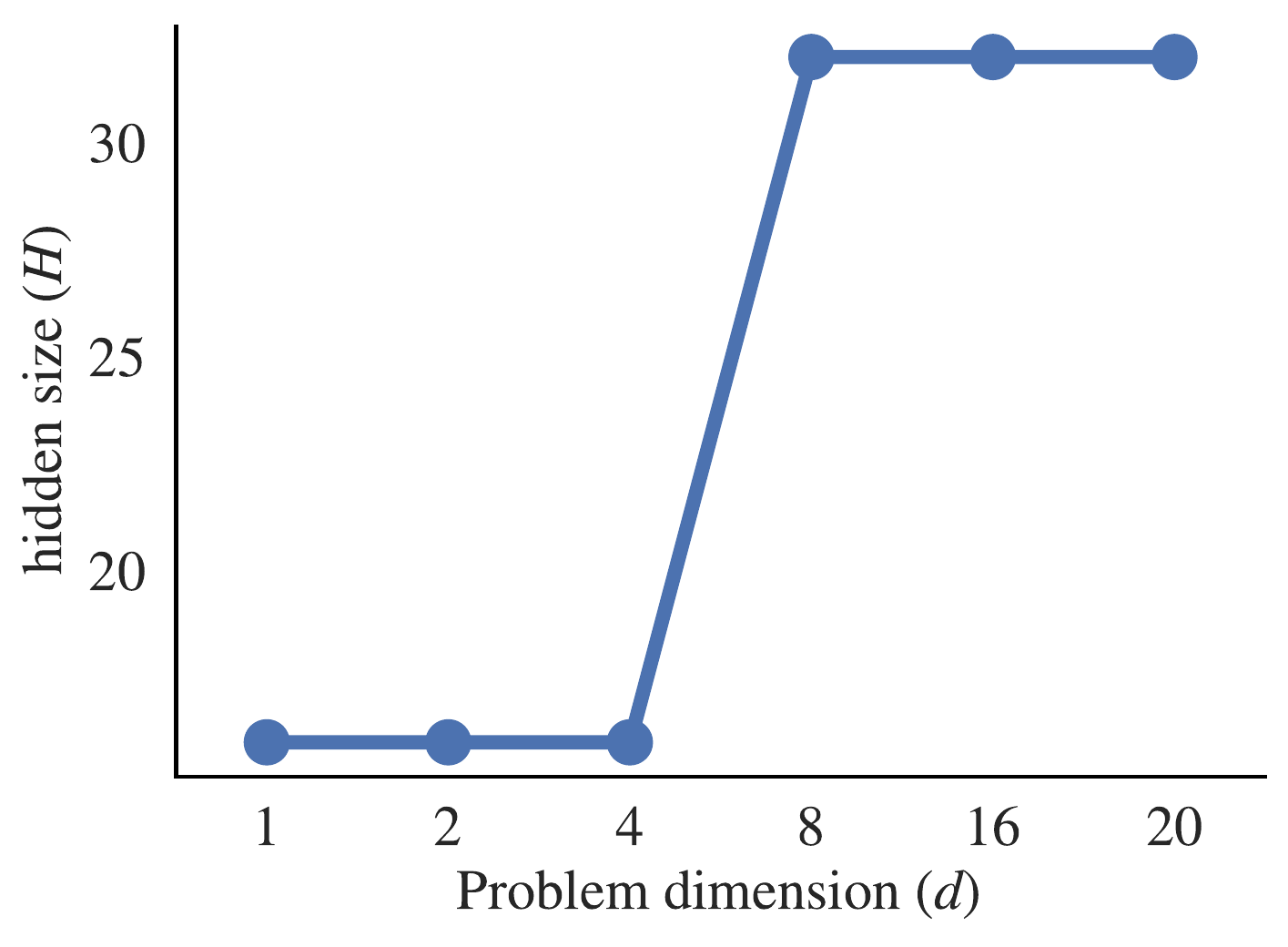}
        \caption{hidden size requirements.}
        \label{fig:reqhidden}
    \end{subfigure}\medskip
\mbox{}\hfill   
    \begin{subfigure}[t]{0.48\textwidth}
        \includegraphics[width=\textwidth]{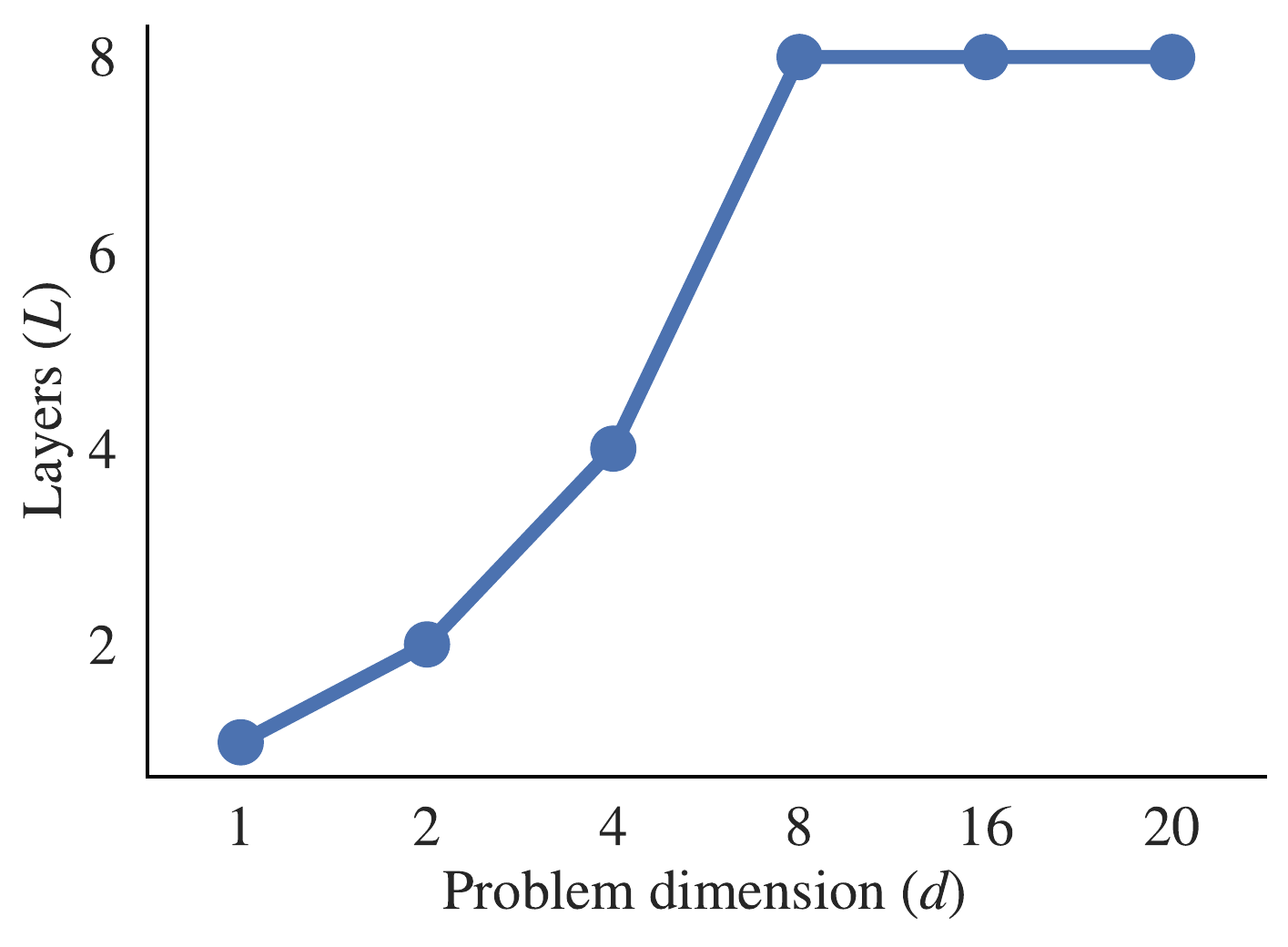}
         \caption{layer requirements.}
        \label{fig:reqlayer}
    \end{subfigure}
\mbox{}\hfill 
    \begin{subfigure}[t]{0.48\textwidth}
        \includegraphics[width=\textwidth]{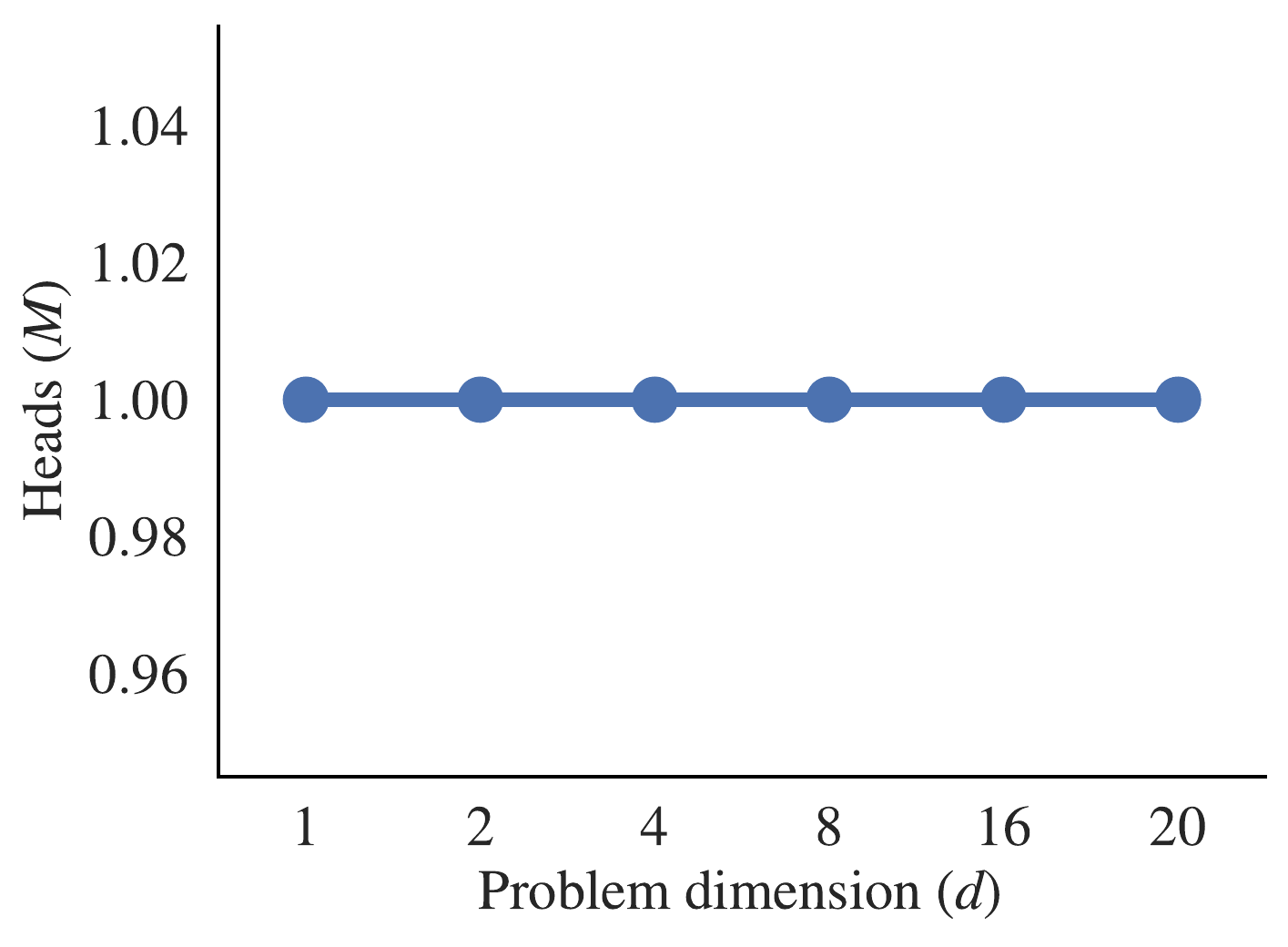}
         \caption{number of head requirements.}
        \label{fig:reqhead}
    \end{subfigure}
\caption{Empirical requirements on model parameters to satisfy $\operatorname{SPD}(\operatorname{Ridge}(\lambda=0.1), \operatorname{ICL}) > \operatorname{SPD}(\operatorname{OLS}, \operatorname{ICL})$ when other parameters optimized.}
\end{figure}
In \cref{fig:phaseshift8d,,fig:phaseshift16d}, we showed that ICL needs different hidden sizes to enter the ``Ridge regression phase'' (orange background) or ``OLS phase'' (green background) depending on the dimensionality $d$ of inputs $x$. However, we cannot reliably read the actual relations between size requirements and the dimension of the problem from only two dimensions. To better understand size requirements, we ask the following empirical question for each dimension: how many layer/hidden size/heads are needed to better fit the least-squares solution than the $\operatorname{Ridge}(\lambda=\epsilon)$ regression solution (the green phase in \cref{fig:phaseshift8d,,fig:phaseshift16d})?

To answer this important question, we experimented with $d=\{1, 2, 4, 8, 12, 16, 20\}$ and run an experiment sweep for each dimension over:
\begin{itemize}
\item number of layers (L): $\{1, 2, 4, 8, 12, 16\}$,    
\item hidden size (H): $\{16, 32, 64, 256, 512, 1024\}$,    
\item number of heads (M): $\{1, 2, 4, 8\}$,   
\item learning rate: \{1e-4, 2.5e-4\}.  
\end{itemize}

For each feature that affects computational capacity of transformer ($L$, $H$, $M$), we optimize other features and find the minimum value for the feature that satisfies  $\operatorname{SPD}(\operatorname{OLS}, \operatorname{ICL}) < \operatorname{SPD}(\operatorname{Ridge}(\lambda=\epsilon), \operatorname{ICL})$. We plot our experiment with $\epsilon=0.1$ in \cref{fig:empreqs}. We find that single head is enough for all problem dimensions, while other parameters exhibit a step-function-like dependence on input size.

Please note that other hyperparameters discussed in \cref{app:hyper} (e.g weight initialization) were not optimized for each dimension independently.

\end{document}